\renewcommand{\Paragraph}[1]{\paragraph*{\hspace{-.15in} \sffamily \textbf{#1:}}}
\title{Coresets for Kernel Regression}
\begin{document}
 
\author{Yan Zheng}
\affiliation{%
  \institution{University of Utah}
  \city{Salt Lake City} 
  \state{UT} 
  \postcode{84108}
}
\email{yanzheng@cs.utah.edu}

\author{Jeff M. Phillips}
\authornote{Thanks to NSF CCF-1350888, IIS-1251019, ACI-1443046, CNS-1514520, CNS-1564287.}
\affiliation{%
  \institution{University of Utah}
  \city{Salt Lake City} 
  \state{UT} 
  \postcode{84108}
}
\email{jeffp@cs.utah.edu}
\date{}

\begin{abstract}  
Kernel regression is an essential and ubiquitous tool for non-parametric data analysis, particularly popular among time series and spatial data.  However, the central operation which is performed many times, evaluating a kernel on the data set, takes linear time.  This is impractical for modern large data sets.  

In this paper we describe coresets for kernel regression: compressed data sets which can be used as proxy for the original data and have provably bounded worst case error.  The size of the coresets are independent of the raw number of data points; rather they only depend on the error guarantee, and in some cases the size of domain and amount of smoothing.  We evaluate our methods on very large time series and spatial data, and demonstrate that they incur negligible error, can be constructed extremely efficiently, and allow for great computational gains.  
\end{abstract}

\maketitle

\section{Introduction}
\label{sec:intro}

Kernel regression~\cite{Nad64,Wat64} is a powerful non-parametric technique for understanding scalar-valued 1-dimensional (and higher dimensional) data sets.  It has distinct advantages over linear or polynomial regression techniques in that it does not impose a possibly restrictive or over-fitting model on the data.  Rather it uses a kernel similarity function to describe a smooth weighted average over the points.  This allows the predicted function to locally adapt to the values of the data.  
These advantages have led to wide use of the kernel regression to predict, model, and visualize data from 
stocks~\cite{Wol00} 
to weather monitoring~\cite{SSIS11} 
to quantified self~\cite{Tom16}.  

However, as these data sets have grown to enormous scale, these kernel regression techniques have hit computational bottlenecks.  Just to evaluate the model at a single query point takes $O(|P|)$ time.  This runtime is completely infeasible in comparison to parametric models where it takes $O(1)$ time, especially in a common situation where many (perhaps $O(|P|)$ queries) are made.  
This paper explores near-linear and sub-linear techniques to compress data for kernel regression, to bring these powerful approaches to large data.  

\subsection{Basic Definitions}
\label{sec:defs}

Kernel regression is based on a \emph{kernel}, which is a bivariate function $K : \b{R}^d \times \b{R}^d \to [0,1]$ which in this setting we can restrict to map to a range of $[0,1]$ without loss of generality.  As input points get closer, the kernel value should get larger.  In this paper we will focus on Gaussian kernels so $K(p,q) = \exp(-\|p-q\|^2/\sigma^2)$ for a smoothing \emph{bandwidth} parameter $\sigma$.  These kernels have nice properties since they are smooth, but other kernels such as Laplace, Epanechnikov, or Triangle which have Lipschitz bounds should be perfectly suited for any of our results.  

We consider an input data sets $P \subset \b{R}^{d+1}$.  We decompose this into the first $d$ explanatory coordinates denoted $P_x \subset \b{R}^d$ and the last dependent one $P_y \subset \b{R}$.  
Most examples we discuss have $d=1$ where it is common to think of these data items $P_x$ as times, but many approaches generalize for larger values of $d$.  
Then each data item $p \in P$ is also associated with a scalar data value $p_y$ (the set of these comprises $P_y$).  

A \emph{kernel density estimate} (KDE) is a smooth function defined by convolving the data set with the kernel, for a query point $q \in \b{R}^d$ as 
\[
\kde_P(q) = \frac{1}{|P|} \sum_{p \in P} K(p_x, q).
\]
We say a \emph{weighted kernel density estimate} (WKDE) replaces this with a weighted sum
\[
\wkde_P(q) = \frac{1}{|P|} \sum_{p \in P} K(p_x, q) p_y.
\]
Finally, the (Nadaraya-Watson) \emph{kernel regression} function is defined for a query point $q \in \b{R}^d$ as
\[
\reg_{P}(q) = \frac{\sum_{p \in P} K(p_x,q) p_y}{\sum_{p \in P} K(p_x,q)} = \frac{\wkde_P(q)}{\kde_P(q)}. 
\]

 \begin{figure}[t!]
  \includegraphics[width=0.49\linewidth]{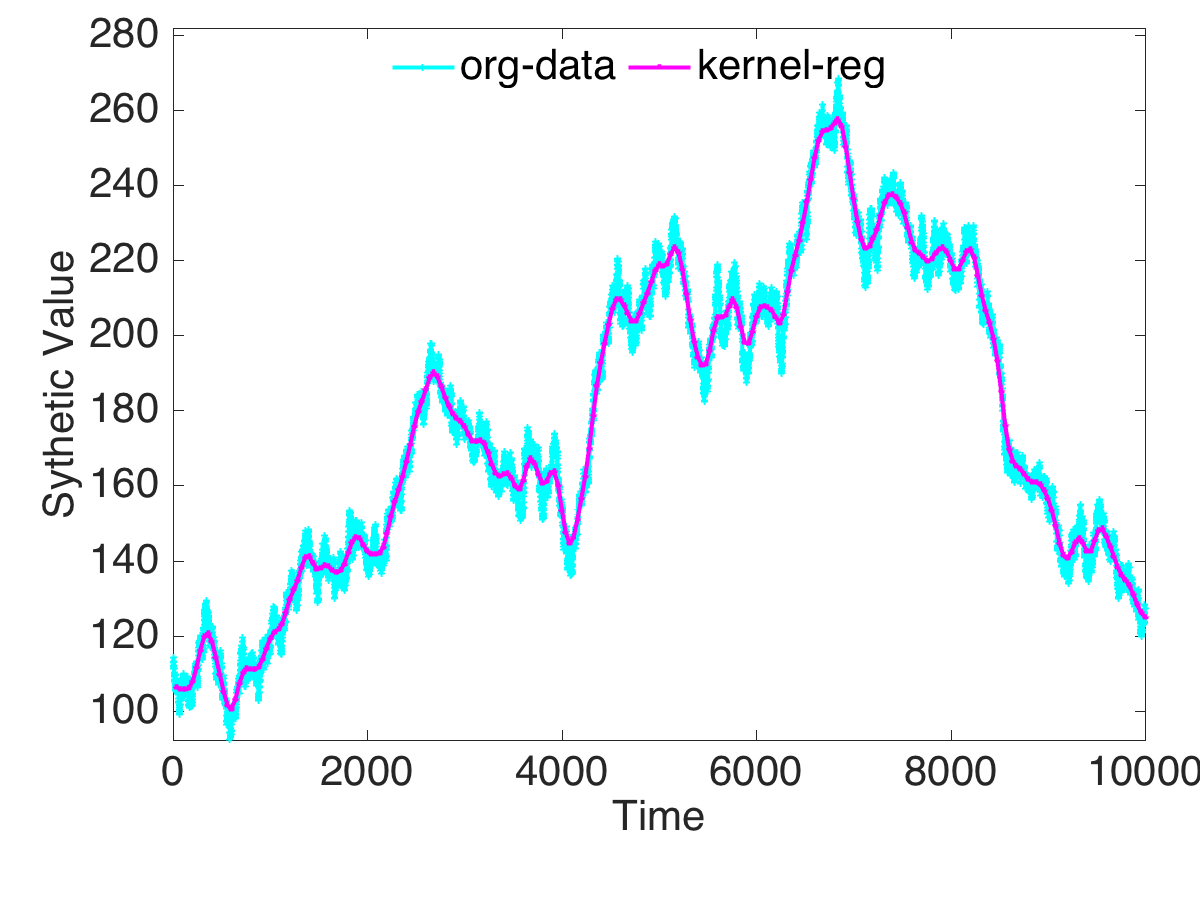}
  \includegraphics[width=0.49\linewidth]{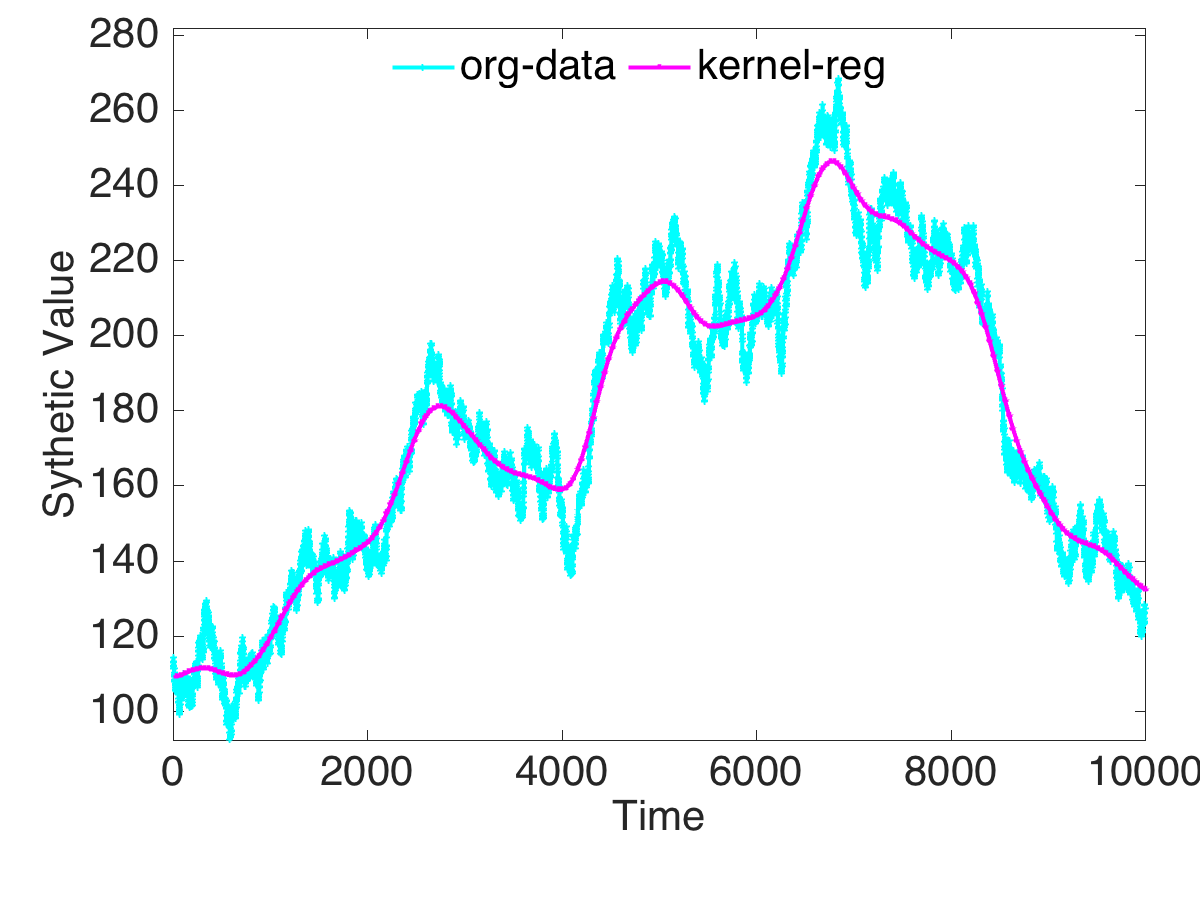}
  \caption{Kernel regression of synthetic data with bandwidth 50 (left) and 200 (right).}
   \label{fig:syn_vis}
\end{figure}
  
This maps each domain point in $\b{R}^d$ to an estimate in the space $\b{R}$ of scalar values; it takes a weighted average (defined by kernel similarity) of the scalar values nearby.  Figure \ref{fig:syn_vis} visualizes, for $d=1$, the kernel regression and its original data of a synthetic time series dataset (see Section \ref{sec:exp}) with bandwidth $50$ and $200$.  The query $q$ can be any time point between $0$ and $10{,}000$.  

There are various other forms of kernel regression~\cite{RW06}.  
But in this paper we focus on the Nadaraya-Watson variety~\cite{Nad64,Wat64} as it has an important long history and has been widely used in areas such as image processing \cite{takeda2007kernel} and economics \cite{blundell1998kernel}.   Moreover, since it does not try to pass through every data point, it is the most robust to outliers that are pervasive in large data, which often by necessity cannot be carefully filtered.  
	
\subsection{Coresets}

The brute force solution of kernel regression is time consuming as each computation  calculates $\kde$ and $\wkde$, which takes $O(|P|)$ time. In this paper, we show how to scalably apply kernel regression to massive scalar-valued data sets.  The main idea is to approximate $P$ with a \emph{coreset} $S$ where $S_x$ can for instance be a subset of $P_x$, but each $s \in S$ can potentially be given a scalar value $s_y$ different from the associated original point. In particular, the coreset $S$ should act as a proxy for $P$, so that for any $q$ the value $\reg_S(q)$ should approximate $\reg_P(q)$.  

The coreset $S$ should be substantially smaller than $P$, while also preserving the strong approximation guarantees.  Any query to $\reg_S$ takes time at most proportional to $|S|$ instead of $|P|$, so the size of $S$ directly impacts the efficiency of interacting with $\reg_S$.  
Moreover, if the construction of $S$ is efficient (and ours is roughly as fast as reading the data, or sorting if needed), then the time to compute $m$ values of the kernel regression (common for say visualization) is also reduced by $|P|/|S|$, after factoring the build time.  Here are a list of scenarios where such coresets are essential.
\begin{itemize}
\item 
The data is too big to store. For example, Square Kilometer Array, the world's largest radio telescope, receives several terabytes of data per second.  Most of the data is in scalar values, such as baseline-corrected power flux density, sensitivity, and receiver temperature, so kernel regression is a good way to track those scalar values over time.  
However storing all of this data is a challenging problem, let alone analyzing it.  Instead of storing all of it, a coreset for kernel regression would keep relevant data that provably behaves like the original data, but needs much less space.
\item 
The data is large and the older parts requires less accuracy.  For instance in analyzing trends in system log data, we want more accurate recent data, but allow more imprecision in historical data.  
For example, in the CloudLab~\cite{ricci2014introducing} central power database, power data serves as a way to monitor the cloud performance. It has scalar values and changes gradually overtime but may have noisy fluctuations. Kernel regression is a good way to track this, and older data can be kept with less precision.  
\item 
The data is for interactive analysis.  To interact with very large data stored on disk one can first analyze a small coreset, and then refine to a larger coresets with more accuracy as more precision is needed; this is much more efficient than bringing all relevant data to disk for each query.  Instead we can maintain several layers of different sized coresets.  For instance, in spatial data systems, such as Mesowest~\cite{horel2002mesowest}, temperature is connected with each geo-coordinate, to show temperature across the United States, a coarse level coreset is sufficient.  But to zoom the map to see the temperature at the state or city level, then a more detailed coreset is required.  
\end{itemize}

\subsection{Our Approach}
\label{sec:approach}

To formalize the meaning of $\reg_S(q)$ is ``close'' to $\reg_P(q)$, we focus on worst-case error guarantees ($L_\infty$ as opposed to $L_2$ or $L_1$ more common to KDEs); this ensures we do not have any spurious regression values.  This is essential for data analysis, since we want to be able to find important trends and detect outlier points, and also not be fooled into thinking we observe a non-existent trend or a non-existent outlier.  

Beyond that, the error function should not be affected by either a shift or a scaling of a scalar value, since this is equivalent to changing the units (e.g., Celsius to Fahrenheit).   
As such a natural bound will be absolute error difference with the bound depending on some quantity that depends on the scaling.  
We will use $M = \max_{p,p' \in P} |p_y - p'_y|$, the maximum difference between scalar values, so as the scale of the units on $p_y$ changes, $M$ does at the same rate.   

In particular, we are interested in a coreset $S$ of a data set $P$ such that for some domain $\c{U} \subset \b{R}^d$ that
\[
\max_{q \in \c{U}} | \reg_P(q) - \reg_S(q) | \leq \eps M.
\]
Our coresets $S$ have size depending linearly on $1/\eps$ and sometimes $\Delta = \max_{p, p' \in P} \|p_x - p'_x\|/\sigma$.  

It is worth noting that in setting $\c{U} = \b{R}^d$, such a result may not be possible.  The kernel regression definition $\reg_P(q)$ has in its denominator $\kde_P(q)$, so when $\kde_P(q)$ is very close to $0$, then $\reg_P(q)$ is very unstable.  So we consider a domain $\c{U}$ which is defined by a mild condition on $\kde_P(q)$; in particular that $\kde_P(q)$ is above some very small value $\rho$.  

To further put this error bound in perspective, consider instead the relative error	
$
\max_{q \in \c{U}} \frac{\reg_S(q)}{\reg_P(q)}.
$
This is unstable whenever $\reg_P(q)$ is close to $0$.  And, furthermore, the $q$ where $\reg_P(q)$ is small, depends entirely on the units chosen for the $p_y$ values.  For instance, we could have $p_y = 32^\circ$ Fahrenheit (not be close to $0$) or $p_y = 0^\circ$ Celsius which is exactly $0$ and makes \emph{any} relative error requirement imply no error at all.  Since the change of units is meaningless, this error measure is not feasible.  
	
\subsection{Our Results}
\label{sec:results}

Our results focus mainly on $P_x \subset \b{R}^1$ and $P_x \subset \b{R}^2$ (so the $x$-coordinate(s) naturally represent time or spatial coordinates), although many aspects extend naturally to high dimensions.  

We first bound the accuracy of a kernel regression coreset formed by random sampling;  these are the first known bounds for the \emph{sample complexity} of kernel regression.  It is of particular interest since in many cases the data set provided on input is itself a random sample from some much larger data set or distribution we do not have access to (e.g., a $1\%$ stream from Twitter).  So if the input data is indeed sampled, our bounds measure the error present before any analysis is applied.  
However, random sampling performs poor compared to most other methods we consider, so it makes sense to further compress them.  

We analyze (theoretically and empirically) several straight-forward aggregation techniques to construct coresets.  These are of particular interest since they mimic common online aggregation techniques~\cite{Bru00}.  We also propose some modifications which demonstrate sizable empirically improvements.  
Interestingly, effective coresets for KDEs~\cite{big-kde}, do not perform the best for kernel regression.  
 
\begin{algorithm}
  \caption{\textbf{Z-order (Z)}}
  \label{alg:algZorder}
    \begin{algorithmic}[1]
    \STATE Sort data $P_x$ in Z-order;  set $h = |P|/|S|$
    \STATE Choose a random number in $r = [0, h-1]$
    \FOR {$i \leftarrow 1$ to $|S|$}
    	\STATE Put $P_{r+h\cdot (i-1)}$ into $S$
    \ENDFOR
    \STATE return $S$
   \end{algorithmic}
 \vskip1pt
\end{algorithm}	
\begin{algorithm}
  \caption{\textbf{Z-Aggregate (ZA)}}
  \label{alg:algAgg}
    \begin{algorithmic}[1]
    \STATE Sort data $P_x$ in Z-order;  set $h = |P|/|S|$
    \FOR {$i \leftarrow 1$ to $|S|$}
    	\STATE $P_i = [P_{h \cdot (i-1)}, \cdots, P_{h\cdot i }] $
    	\STATE  Put average of all the points in $P_i$ into $S$
    \ENDFOR
    \STATE return $S$
   \end{algorithmic}
\end{algorithm}	

In particular, our recommended method \textbf{G-Aggregate} for $P_x \subset \b{R}^1$, carefully aggregates data over a fixed size non-empty grid cells; it takes $O(|P|)$ after sorting the data.  
For $P_x \subset \b{R}^2$, we recommend \textbf{Aggregate-Neighbor}, which carefully adds a few points to the coreset from \textbf{G-Aggregate}.  For a data sets $P_x \subset \b{R}^{d}$, these both produce a coreset $S$ of size $O(\Delta /\eps \rho)^d$, where $\Delta = \max_{p,p' \in P} \|p_x - p'_x\|/\sigma$, and guarantees for any $q \in \b{R}^d$ with $\kde_{P_x}(q) > \rho$ that 
$
| \reg_P(q) - \reg_S(q) | \leq \eps M,
$
where $M = \max_{p,p' \in P} |p_y - p'_y|$.
Moreover, these methods are simple to implement and work extremely well on real and synthetic data sets.

\subsection{Related Work}
This is the first work to address sample complexity and coreset size for Nadaraya-Watson kernel regression.  There is an enormous body of work on other types of coresets, see the recent survey on coresets~\cite{Phi16}, including many for parametric regression variants like least-square regression~\cite{boutsidis2012near} and $l_p$ regression~\cite{dasgupta2009sampling}.  

The only non-parametric regression coreset we are aware of is a form of kernel regression~\cite{wei2008theoretical} related to the smallest enclosing ball.  It predicts the value at a point $q \in \b{R}^d$ as $f(q) = \beta + \sum_{p \in P} \alpha_p K(p_x,q)$ with loss function $\sum_{p \in P} \max\{ 0, | f(p_x) - p_y| - \bar \eps \}$, for a parameter $\bar{\eps}$.  Then it finds a set of $O(1/\eps)$ non-zero $\alpha_p$ parameters (corresponding with points in the coreset) so many points satisfy $|f(p_x) - p_y| \leq \bar \eps (1+\eps)$.  No implementations were attempted.  

Rather, we believe the most related work involves coresets for kernel density estimates~\cite{JoshiKommarajuPhillips2011,BFLRSW13,Phillips2013,big-kde} as mentioned above.  We extend some of these results and show others do not work well when translated to the regression variant of this problem.  

\section{Subset Selection Methods}
\label{sec:methods}
We next describe several natural approaches to compress scalar-valued spatial data.  Some of these are likely in use in existing data aggregation frameworks (e.g., RFF~\cite{Bru00}), but as far as we know have not been analyzed in how they preserve kernel regression values.  

\Paragraph{Random sampling (RS)}
This method simply draws a uniform random sample $S$ from the data set $P$.  This is probably the most common data aggregation method anywhere.  In other cases, it is often assumed that even before aggregating data, the data is only a random sample of some unseen larger ``true'' dataset.  
This is known to approximate kernel density estimates~\cite{JoshiKommarajuPhillips2011,GBRSS12,BFLRSW13}, and we will show extends to kernel regression.  

\begin{algorithm}
  \caption{\textbf{G-Aggregate (GA)}}
  \label{alg:galgAgg}
    \begin{algorithmic}[1]
    \STATE Map $P_x$ into grid $G_\gamma$ 
    \FOR {$g \in G_\gamma(P)$} 
    	\STATE  Put average of all the points in $P_g$ into $S$
    \ENDFOR
    \STATE return $S$
   \end{algorithmic}
 \vskip1pt
\end{algorithm}	
\begin{algorithm}
  \caption{\textbf{Aggregate-Neighbor (AN)}}
  \label{alg:algAN}
    \begin{algorithmic}[1]
    \STATE Map $P_x$ into grid $G_\gamma$ 
    \FOR {$g \in G_\gamma(P)$} 
    	\STATE  Put average of all the points in $P_g$ into $S$
    \ENDFOR
    \FOR {$g \in G_\gamma(\bar P)$ adjacent to $G_\gamma(P)$} 
    \STATE For center $c$ of $g$, put $(c, \reg_P(c))$ in $S$
    \ENDFOR
    \STATE return $S$
   \end{algorithmic}
 \vskip1pt
\end{algorithm}	

\Paragraph{k-Center (kCen)} 
This method creates a $k$-center clustering of $P_x$ using the greedy Gonzalez algorithms~\cite{Gon85}; that finds a set of $k$ center points which (with a factor $2$) minimizes the distance to the furthest data point.   
This is inspired by both a recent way to approximate the kernel mean (equivalent to the KDE)~\cite{arnold2015sparse} and also the initial step in (improved) fast Gauss transforms~\cite{Yang2003}.  
It takes $O(kn)$ time to find the center set, and then data points can be aggregated to the closest center in as much time.

\Paragraph{Sorting-based approaches}
For $P_x \subset \mathbb{R}^1$, these methods just sort the points, and choose $S$ as evenly spaced points in the sorted order.  Inspired by a coreset for KDEs, we can extend to higher dimensions using the Z-order space-filling curve to implicitly define a single ordering over the data points which attempts to preserve spatial locality.  Hence we refer to it as \textbf{Z-order (Z)}.  Also, inspired by this approach we take a random point from each block in the sorted order instead of the first or last of each block deterministically.  

As an extension, we propose \textbf{Z-Aggregate (ZA)} which is more careful on how it represents each interval.   It again sorts the $x$-coordinate(s) of $P$ by Z-order, and then for a set of consecutive points $P_i$ of size $h$,  $[h(i-1),hi)$ for $ i = 1,2, \ldots, k$, choose $s_x = \frac{1}{|P_i|}\sum_{p \in P_i} p_x$ and $s_y = \frac{1}{|P_i|}\sum_{p \in P_i} p_y$ as the $i$th point in $S$.

\Paragraph{Grid-based approaches}
Define a grid $G_\gamma$ into square grid cells (intervals for $P_x \subset \b{R}$) of side length $\gamma$.  
It will be convenient to designate $G_\gamma(P)$ as the non-empty grid cells, and $G_\gamma(\bar P)$ as the empty grid cells. For a grid $g$, define $P_g \subset P = \{p \in P \mid p_x \in g\}$, the points in $g$.  
In the basic method \textbf{Grid (G)}, for each $g \in G_\gamma(P)$, randomly place one point from $P_g$ into $S$, and give it a weight $|P_g|$.  

In an extension \textbf{G-Aggregate (GA)}, for each $g \in G_\gamma(P)$ we create a new point to place in $S$ as $(s_x, s_y)$ defined $s_x = \frac{1}{|P_g|} \sum_{p \in P_g} p_x$ and $s_y = \frac{1}{|P_g|} \sum_{p \in P_g} p_y$.  

The above algorithms can be subtly further improved by adding extra points in the empty grids with non-empty grids as neighbors, we call this \textbf{Aggregate-Neighbor (AN)}. Specifically these empty, but adjacent cells generate a point at the cell center $c$ with value equal to the kernel regression value $\reg_P(c)$.  This takes a bit longer than just performing an aggregate, but these empty, but adjacent cells are few so the time burden is negligible.  This is inspired by the work \cite{DBLP:journals/corr/abs-1305-3207} and the illustrative toy example in Figure \ref{fig:syn_err}. 
We will see the improvement is especially significant for $P_x \subset \b{R}^2$.

\subsection{Progressive Grid-based approaches}
In many scenarios, $P_x \subset \b{R}$ and this coordinate represents time.  Let the current time $t_{\textsc{now}}: x = 0$, and so all other values are negative (say $5$ hours ago is $x = -5$).  
In these settings, we might only examine windows of the data over $x \in [-T, 0]$, that is including now, and up to $T$ time units into the past.  Further we can assume over any view we would set the bandwidth $\sigma$ so that $\Delta = \max_{p, p' \in P} \|p_x - p'_x\|/\sigma = T/\sigma$ is upper bounded; otherwise the smoothing is below the resolution of the what can fit in a view window (its too noisy).  

For these scenarios, we design a progressive approach where we allow more errors for older data points.  Extending the grid-based approaches, as data becomes older (new points arrive) we increase the grid resolution $\gamma$, and further compress the data.  Specifically, we divide $P$ into regions $R_1, \ldots, R_r$ so the resolution $\gamma_i$ used in region $R_i$ is $\gamma_i = a^{i-1} \gamma_1$, where $a$ is a constant (we use $a=1.5$ in our experiments).  Setting the width of region $\mathsf{width}(R_i) = a^{i-1} \mathsf{width}(R_1)$ ensures that there are the same number of grid cells in each region.  Then for a fixed resolution in the first region, the size of the coreset will grow only logarithmically with time. 

\section{Analysis}
\label{sec:analysis} 

We start by providing some structural lemmas that relate approximations of kernel density estimates and weighted kernel density estimates to kernel regression.  Then we will use these results to bound the accuracy of specific techniques.  

Our goal in each case is to show that the coreset $S$ approximates the full data set $P$ in the following sense for parameters $\rho,\eps \in  (0,1)$.  
For any $q \in \b{R}^d$ such that $\kde_P(q) >\rho$, then 
\vspace{-.05in}
\[
|\reg_P(q) - \reg_S(q)| \leq \eps M,
\]
where $M = \max_{p,p' \in P} |p_y - p'_y|$.  
Then call $S$ an \emph{$(\rho,\eps)$-coreset} of $P$.  

We believe such strong worst case bounds should be surprising.  If we revisit Figure \ref{fig:syn_err} we can observe that removing one point can cause error in $\reg_P(q) - \reg_S(q)$ on the order of $M$ (in this case $M/4$). 

\begin{figure}[t!]
  \includegraphics[width=.49\linewidth]{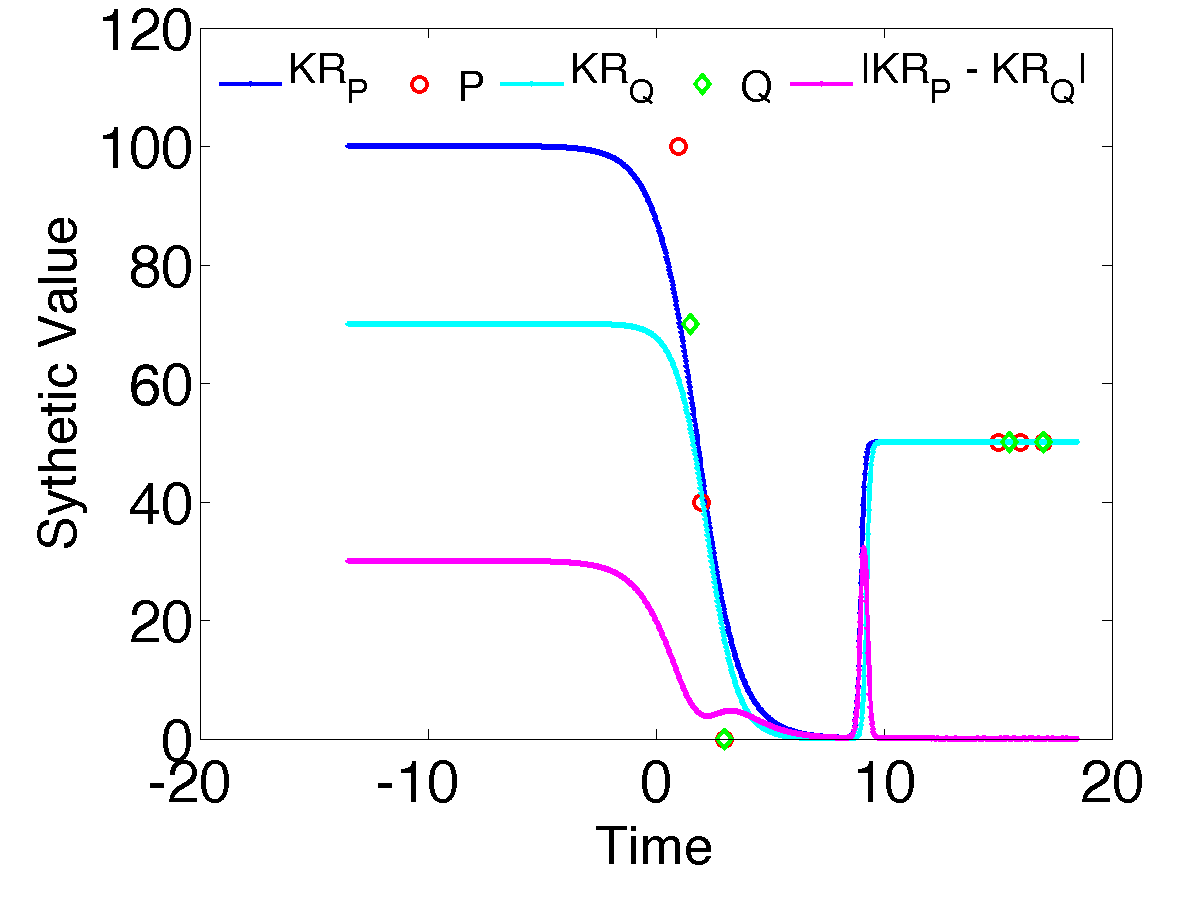}
  \includegraphics[width=.49\linewidth]{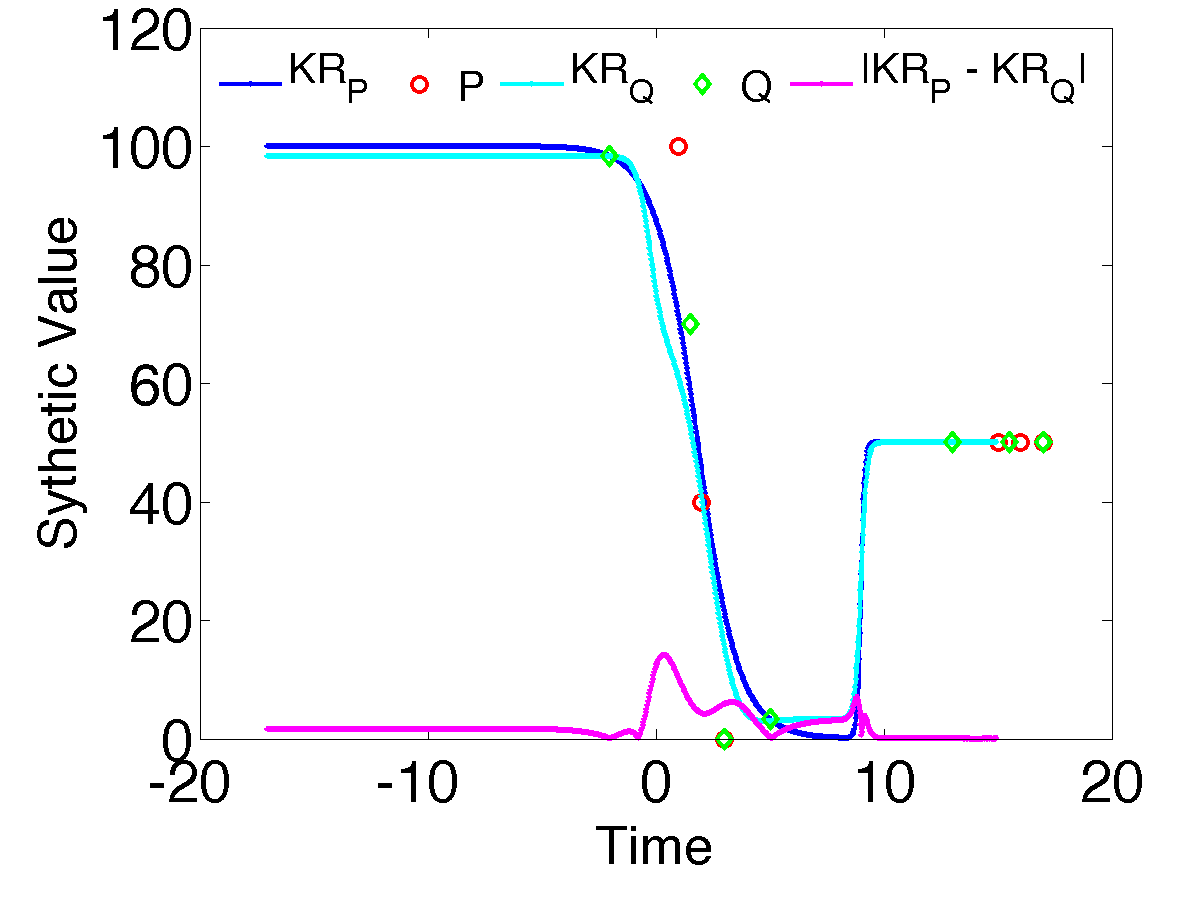}
\vspace{-.2in}
  \caption{\small \sffamily Example improvement of \textbf{Aggregate-Neighbor} (right) over \textbf{G-Aggregate} (left).  Input $P = \{(1\ 100),(2\ 40),(3\ 0),(15\ 50),(16\ 50),(17\ 50)\}$. With \textbf{G-Aggregate} with $\gamma = 2$, the coreset $Q = \{(1.5\ 70), (3\ 0), (15.5\ 50),(17\ 50)\}$. The largest errors occur at $x<0$ and around $x = 9$. 
If we add the extra points at the empty grid cells with non-empty neighbor grids, i.e., \textbf{Aggregate-Neighbor}, then $L_\infty$ is significantly reduced.  We add three points $(-2.06\ 98.3124), (5\ 3.2559)$ and $(13\ 50)$.} 
\label{fig:syn_err}
\end{figure}
 
\vspace{-.05in}
\Paragraph{Structural results}
We need a few definitions and previous results before we can begin stating our new structural tools.  
A data set $X$ and a family of subsets $\c{R}$ define a \emph{range space} $(X,\c{R})$, and the range space's VC-dimension $\nu$~\cite{VC71} (informally) describes the combinatorial complexity of the ranges; typically $\nu = \Theta(d)$.  A kernel $K : \b{R}^d \times \b{R}^d \to \b{R}^+$ is \emph{linked} to a range space if for every threshold $\tau$ and subset $Y_x \subset X$ defined $Y_x = \{y \in X \mid K(x,y) \geq \tau,\; x \in \b{R}^d\}$ there exists a range $R \in \c{R}$ such that $R \cap X = Y_x$.  
Importantly, all centrally symmetric kernels (including Gaussians) are linked to a range space where $\c{R} = \c{B}$, meaning all subsets are defined by inclusion in balls.  

A \emph{relative $(\rho,\eps)$-approximation} of $(X,\c{R})$ is a set $Y$
\[
\max_{R \in \c{R}} \left| \frac{|R \cap X|}{|X|} - \frac{|R \cap Y|}{|Y|} \right| \leq \eps \max \left\{\frac{|R \cap X|}{|X|},\rho \right\}.  
\]
Similarly, define a \emph{relative $(\rho,\eps)$-approximation} of $(P,K)$ for kernel $K$ as a set $S$ such that 
\[
\max_{x \in \b{R}^d} \left|\kde_P(x) - \kde_S(x)\right| \leq \eps \max\{ \kde_P(x),\rho\}.
\]
Define a \emph{$(\rho,\eps)$-approximation for kernel regression of $P$} as a set $S$ such that $\kde_P(q) \geq \rho$, then
\[
|\reg_S(q) - \reg_P(q)| \le \eps M, 
\]
where $M = \max_{p,p' \in P} |p_y - p'_y|$.
Define a (non-relative) $\eps$-approximation $Y$ of a range space $(X,\c{R})$, so
\[
\max_{R \in \c{R}} \left| \frac{|R \cap X|}{|X|} - \frac{|R \cap Y|}{|Y|} \right| \leq \eps.  
\]

It is know an $\eps$-approximation can be constructed, with probability at least $1-\delta$ via a random sample $S$ of size $O((1/\eps^2) (\nu + \log 1/\delta))$, and a relative $(\rho, \eps)$-approximation with size $O((1/\rho \eps^2) (\nu \log(1/\rho) + \log(1/\delta)))$~\cite{LLS01,har2011geometric}.  Given an $\eps$-approximation $S$ of a range space linked to $K$, then it is known~\cite{JoshiKommarajuPhillips2011} that it is also a (non-relative) $\eps$-approximation of $(P,K)$.  
In this paper (in the Appendix \ref{ap:link}) 
we generalize this linking result (roughly following the structure of the proof in \cite{JoshiKommarajuPhillips2011}) to relative $(\rho,\eps)$-approximations.  

\begin{theorem}
\label{thm:link}
For any kernel $K : \b{R}^d \times \b{R}^d \to \b{R}^+$ linked to a range space ($\b{R}^d, \c{A}$), a relative $(\rho,\eps)$-approximation $S$ of $(P, \c{A})$ is a $(\rho K^+,2\eps)$-approximation of $(P, K)$, where $K^+ = \max_{p,q\in P} K(p,q)$. 
\end{theorem}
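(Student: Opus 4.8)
The plan is to adapt the layer-cake (Fubini) argument that underlies the non-relative linking result of \cite{JoshiKommarajuPhillips2011} and push it through the \emph{relative} guarantee. Fix a query $x \in \b{R}^d$. Because $K$ is linked to $(\b{R}^d, \c{A})$, for every threshold $\tau > 0$ the superlevel set $\{y \mid K(y,x) \geq \tau\}$ is realized by a single range $R_\tau \in \c{A}$ (a ball centered at $x$ for a Gaussian), and this one geometric range simultaneously selects the threshold set inside $P$ and inside $S$. Writing each kernel value by its layer-cake identity $K(p_x,x) = \int_0^\infty \mathbf{1}[K(p_x,x) \geq \tau]\, d\tau$ and exchanging the finite sum with the integral gives $\kde_P(x) = \int_0^\infty \frac{|R_\tau \cap P|}{|P|}\, d\tau$, and likewise $\kde_S(x) = \int_0^\infty \frac{|R_\tau \cap S|}{|S|}\, d\tau$.

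First I would move the absolute value inside the integral and apply the relative $(\rho,\eps)$-approximation bound to each $R_\tau \in \c{A}$, giving
\[
|\kde_P(x) - \kde_S(x)| \leq \int_0^\infty \left| \frac{|R_\tau \cap P|}{|P|} - \frac{|R_\tau \cap S|}{|S|} \right| d\tau \leq \int_0^\infty \eps \max\left\{ \frac{|R_\tau \cap P|}{|P|},\, \rho \right\} d\tau .
\]
The main step, and the place where the relative version genuinely departs from the absolute one, is controlling the right-hand integral: in the absolute proof one simply integrates a constant $\eps$ over a bounded $\tau$-interval, but here the $\rho$ branch of the maximum would integrate to $+\infty$ if taken over all $\tau$. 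The resolution is that the discrepancy integrand already vanishes once $\tau$ exceeds $\tau_{\max} = \max_{p \in P} K(p_x,x)$, since beyond that both $R_\tau \cap P$ and $R_\tau \cap S$ are empty; so the effective domain is $[0,\tau_{\max}]$. The key observation making $K^+$ appear is that every kernel value is bounded by the diagonal peak, $K(p_x,x) \leq \max_{p,q \in P} K(p,q) = K^+$ (for a Gaussian the diagonal $p=q$ gives $K^+=1$), hence $\tau_{\max} \leq K^+$. Splitting $\max\{a,\rho\} \leq a + \rho$ then yields
\[
\int_0^{\tau_{\max}} \max\left\{ \tfrac{|R_\tau \cap P|}{|P|},\, \rho \right\} d\tau \;\leq\; \int_0^{\tau_{\max}} \tfrac{|R_\tau \cap P|}{|P|}\, d\tau \;+\; \rho\,\tau_{\max} \;\leq\; \kde_P(x) + \rho K^+ .
\]

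Finally I would assemble these into $|\kde_P(x) - \kde_S(x)| \leq \eps\bigl(\kde_P(x) + \rho K^+\bigr)$ and convert the sum into the stated maximum via $a+b \leq 2\max\{a,b\}$, obtaining $|\kde_P(x) - \kde_S(x)| \leq 2\eps\max\{\kde_P(x),\, \rho K^+\}$; taking the supremum over $x \in \b{R}^d$ finishes the argument. The delicate points I expect to need care on are (i) verifying that the \emph{same} linked range cuts out the threshold set in both $P$ and $S$ so that the approximation guarantee applies termwise to the difference, and (ii) the bookkeeping that keeps the $\rho$-contribution finite and pinned to $K^+$ rather than to the unbounded tail of $\tau$. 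The factor of two is then an unavoidable artifact of relaxing the $\max$ into a sum, and I do not expect it can be removed without a sharper, non-separable estimate of the integral.
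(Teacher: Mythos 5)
Your proof is correct, and it takes a genuinely different route from the paper's. The paper's appendix proof adapts the discrete charging scheme of \cite{JoshiKommarajuPhillips2011}: for a fixed query it sorts $P$ by similarity to $q$, partitions it into interleaved blocks $P_j'$ and $P_j$, charges each $s_j \in S$ to $g = |P|/|S|$ points of $P$, and then bounds undercounts and overcounts in two separate case analyses, each time invoking the relative $(\rho,\eps)$-guarantee on a single extremal super-level set $H \in \c{A}$. Your layer-cake argument instead expresses $\kde_P(x)$ and $\kde_S(x)$ as integrals over thresholds $\tau$ of the range counts $|R_\tau \cap P|/|P|$ and $|R_\tau \cap S|/|S|$, applies the relative guarantee \emph{simultaneously at every} $\tau$ (legitimate, since that guarantee is uniform over all ranges in $\c{A}$), and integrates; the $\rho$-branch of the maximum contributes at most $\rho\,\tau_{\max} \leq \rho K^+$, and the elementary steps $\max\{a,\rho\} \leq a+\rho$ and $a+b \leq 2\max\{a,b\}$ produce exactly the claimed $2\eps \max\{\kde_P(x), \rho K^+\}$ bound. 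What your route buys: it is far shorter, it makes the provenance of both the factor $2$ and the constant $K^+$ completely transparent, and it avoids the undercount/overcount case split and the attendant bookkeeping (the paper's version even carries some extraneous side conditions, such as a lower bound relating $\rho$ to $\kappa_P(P_1',q)$, that your argument never needs). Two small caveats, neither of which distinguishes you from the paper: (i) your claim that the integrand vanishes beyond $\tau_{\max} = \max_{p \in P} K(p_x,x)$ implicitly assumes $S \subseteq P$; for an arbitrary relative approximation simply integrate over $[0,K^+]$ instead, where the pointwise bound $\eps \max\{|R_\tau \cap P|/|P|, \rho\}$ still integrates to $\eps(\kde_P(x) + \rho K^+)$; and (ii) both your argument and the paper's rely on all kernel evaluations, including those at query points outside $P$, being at most $K^+$ --- true for the centrally symmetric, diagonally-peaked kernels the paper considers (Gaussian: $K^+=1$), but strictly an assumption beyond bare linking.
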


Next we provide a sufficient condition for $(\rho,\eps)$-approximation for kernel regression.  

\begin{lemma}
\label{lm:kr}
For error parameters $\alpha, \beta, \rho > 0$, with $\alpha \leq 1/2$, consider a point set $P \subset \b{R}^{d+1}$. 
Let $S$ be a coreset of $P$ so that for any query point $q \in \b{R}^d$,  both
\begin{align*} 
|\kde_P(q) - \kde_S(q)| & < \alpha \max\{\kde_P(q), \rho\}
\\
|\wkde_P(q)-\wkde_S(q)| & < \beta M.
\end{align*}  
Then for any $q \in \b{R}^d$ such that $\kde_P(q) \geq \rho$, then 
$
|\reg_S(q) - \reg_P(q)| \le 4 (\alpha + \beta/\rho) M.
$
\end{lemma}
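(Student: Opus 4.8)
The plan is to treat the four scalars $f = \kde_P(q)$, $\hat f = \kde_S(q)$, $g = \wkde_P(q)$, $\hat g = \wkde_S(q)$, tied by $\reg_P(q) = g/f$ and $\reg_S(q) = \hat g/\hat f$, and to reduce the statement to a single exact identity for the difference $\hat g/\hat f - g/f$ that isolates the two approximation errors $\delta_f = \hat f - f$ and $\delta_g = \hat g - g$. First I would record what the hypotheses give: $|\delta_g| < \beta M$, and, since $\kde_P(q) \ge \rho$ forces $\max\{\kde_P(q),\rho\} = f$, also $|\delta_f| < \alpha f$. In particular $\hat f > (1-\alpha)f \ge (1-\alpha)\rho \ge \rho/2$ using $\alpha \le 1/2$, so the denominator of $\reg_S(q)$ is safely bounded away from $0$, which is exactly the role of the $\kde$ lower bound $\rho$.

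The key algebraic step is to put the difference over a common denominator and cancel. Writing $\hat f = f + \delta_f$ and $\hat g = g + \delta_g$, the cross terms $gf$ cancel, leaving
\[
\reg_S(q) - \reg_P(q) = \frac{\hat g f - g \hat f}{f \hat f} = \frac{\delta_g f - g\,\delta_f}{f\hat f} = \frac{\delta_g - \reg_P(q)\,\delta_f}{\hat f}.
\]
This is the crucial form: the (possibly large, unit-dependent) overall level of the scalar values enters only through the single factor $\reg_P(q)$ multiplying $\delta_f$, while everything else is already controlled by the hypotheses. Applying the triangle inequality and the two error bounds gives $|\reg_S(q) - \reg_P(q)| < (\beta M + |\reg_P(q)|\,\alpha f)/\hat f$, and substituting $\hat f > f/2$ together with $f \ge \rho$ then yields a bound of the shape $2\beta M/\rho + 2\alpha|\reg_P(q)|$, which is what we want provided $|\reg_P(q)|$ can be controlled.

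The main obstacle, and the only nontrivial point, is bounding the factor $|\reg_P(q)|$: a weighted average of the $p_y$ can be arbitrarily large in absolute value, so no naive bound suffices. Here I would invoke shift-invariance. Both the target quantity $\reg_S(q) - \reg_P(q)$ and the scale $M$ are unchanged when every scalar value is translated by a common constant (this translates $\reg_P$ and $\reg_S$ equally), and the aggregated coreset values lie in the convex hull $[\min_p p_y, \max_p p_y]$ of the data values. Thus I may normalize so that $p_y \in [0,M]$, which makes $\reg_P(q)$ a convex combination in $[0,M]$ and hence $|\reg_P(q)| \le M$, with the $\wkde$ hypothesis read relative to this normalized data. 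Substituting $|\reg_P(q)| \le M$ into the previous display gives $|\reg_S(q) - \reg_P(q)| < 2(\alpha + \beta/\rho)M \le 4(\alpha + \beta/\rho)M$, the loose constant comfortably absorbing the normalization bookkeeping. I expect essentially all of the difficulty to sit in this normalization and convex-hull observation; the remainder is the routine cancellation above and two applications of the triangle inequality.
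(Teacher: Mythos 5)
Your proposal is correct to the same standard as the paper's own proof, but the algebra is genuinely different, so a comparison is worthwhile. The paper first rescales the $p_y$ values to lie in $[1,2]$ (so that $M=1$ and, crucially, $\wkde_P(q)\ge\kde_P(q)\ge\rho$), and then bounds the ratio
\[
\frac{\reg_S(q)}{\reg_P(q)}=\frac{\wkde_S(q)}{\wkde_P(q)}\cdot\frac{\kde_P(q)}{\kde_S(q)}
\]
multiplicatively from both sides, converting the two-sided ratio bound into an additive bound at the end; the lower bound $\wkde_P(q)\ge\rho$ is what lets it turn the additive error $\beta$ into the relative error $\beta/\rho$. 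You instead use the exact identity $\reg_S(q)-\reg_P(q)=\bigl(\delta_g-\reg_P(q)\,\delta_f\bigr)/\kde_S(q)$, with $\delta_f=\kde_S(q)-\kde_P(q)$ and $\delta_g=\wkde_S(q)-\wkde_P(q)$, which needs a lower bound only on the single denominator $\kde_S(q)\ge(1-\alpha)\kde_P(q)\ge\rho/2$ and never divides by $\wkde_P(q)$. That is why you can normalize to $[0,M]$ rather than $[1,2]$, and it yields the sharper constant $2(\alpha+\beta/\rho)\le 4(\alpha+\beta/\rho)$. This is a cleaner decomposition than the paper's.

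One caution: the step you yourself flag as the crux --- translating all scalar values by a common constant --- is exactly as delicate in your argument as in the paper's, and neither write-up justifies it. The hypothesis $|\wkde_P(q)-\wkde_S(q)|<\beta M$ is \emph{not} shift-invariant: if every value (data and coreset) is shifted by $c$, then $\wkde_P(q)-\wkde_S(q)$ changes by $c\,\bigl(\kde_P(q)-\kde_S(q)\bigr)$, and the first hypothesis controls this cross term only by $|c|\,\alpha\max\{\kde_P(q),\rho\}$, which is not of order $\beta M$ when $|c|$ is large. So ``reading the $\wkde$ hypothesis relative to the normalized data'' is an additional assumption rather than a consequence of the stated hypotheses; the paper makes the identical leap when it asserts that shifting the values ``does not change the approximation factor $\beta M$.'' Your convex-hull remark about the coreset values does not close this gap (and is not needed for $|\reg_P(q)|\le M$, which is automatic once the data values are normalized, $\reg_P(q)$ being a convex combination of them). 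In short, your proof matches the paper's proof in rigor and improves on its algebra, but both arguments tacitly require the $\wkde$ hypothesis to hold for value-normalized data --- that is, for scalar values whose magnitude, not merely whose spread $M$, is bounded.
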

\begin{proof}
Change the units of $p_y$ so all values lie between $1$ and $2$.  The shifting of these values does not change the approximation factor $\beta M$, but the rescaling of the range changes the bound to $| \wkde_P(q) - \wkde_S(q) | \leq \beta$, and also ensures $1 \leq \reg_P(q) \leq 2$.  And recall, $p_y$ values have no bearing on $\kde_P(q)$.  

By using the Gaussian kernel we have $\kde_S(q) > 0$ and also $0 \leq \wkde_P(q) \leq 2$.  Thus we can consider relative error bounds, using $\kde_P(q) > \rho$, and hence also $\wkde_P(q) > \rho$.  
\begin{align*}
\frac{\reg_S(q)}{\reg_P(q)}
& = 
\frac{\wkde_S(q)}{\wkde_P(q)} \frac{\kde_P(q)}{\kde_S(q)}
 \\& \geq 
\left(1 - \frac{\beta}{\wkde_P(q)} \right)
\left(1 - \frac{\alpha}{1+\alpha} \right) 
\\ & = 
1 - \frac{\beta}{\wkde_P(q)} - \frac{\alpha}{1+\alpha} + \frac{\alpha \beta}{(1+\alpha)\wkde_P(q)}
 \\& \geq 
1 - \frac{\beta}{\rho} - \alpha
\end{align*}

Next we see the relative error bound is slightly different in the other direction.  

\begin{align*}
\frac{\reg_S(q)}{\reg_P(q)}
& = 
\frac{\wkde_S(q)}{\wkde_P(q)} \frac{\kde_P(q)}{\kde_S(q)}
\\& \leq 
\left(1 + \frac{\beta}{\wkde_P(q)} \right)
\left(1 + \frac{\alpha}{1-\alpha} \right)
\\ & = 
1 + \frac{\beta}{\wkde_P(q)} + \frac{\alpha}{1-\alpha} + \frac{\alpha \beta}{(1-\alpha)\wkde_P(q)}
\\&  \leq 
1 + \frac{\beta}{\rho} + \frac{\alpha}{1-\alpha} + \frac{\alpha \beta}{(1-\alpha)\rho}
\\& = 
1+ \frac{\beta}{(1-\alpha)\rho}+ \frac{\alpha}{1-\alpha}
\\& \leq  
1+\frac{2\beta}{\rho} + 2\alpha
\end{align*}

Together these imply $\frac{\reg_S(q)}{\reg_P(q)} \in [1- \beta/\rho - \alpha, 1 + 2\beta/\rho + 2\alpha]$.  
This translates to the following additive error
\begin{align*}
|\reg_P(q) - \reg_S(q)| 
& \leq 
2(\beta/\rho + \alpha) \reg_P(q)
\\& \leq
2 (\beta/\rho + \alpha)  2M
 =
4 (\alpha + \beta/\rho) M.  \qedhere
\end{align*}
\end{proof}
 
We also need another property about the slope of the Gaussian kernel.  This is the only bound specific to the Gaussian kernel, so for any other kernels with a similar bound (e.g., Triangle, Epanechnikov) the remaining analysis and algorithms can apply.  
\begin{lemma}
\label{lm:glip}
A unit Gaussian kernel $K(x) = \exp(-x^2/2\sigma^2)$ is $1/\sigma$-Lipschitz.
\end{lemma}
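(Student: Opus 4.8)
The plan is to reduce the Lipschitz bound to a bound on the derivative: since $K$ is differentiable everywhere, the mean value theorem gives, for any $a,b \in \b{R}$, that $|K(a)-K(b)| = |K'(c)|\,|a-b|$ for some $c$ between $a$ and $b$, so it suffices to show $\sup_x |K'(x)| \le 1/\sigma$. First I would differentiate, obtaining
\[
K'(x) = -\frac{x}{\sigma^2}\exp\!\left(-\frac{x^2}{2\sigma^2}\right),
\qquad
|K'(x)| = \frac{|x|}{\sigma^2}\exp\!\left(-\frac{x^2}{2\sigma^2}\right).
\]

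Next I would maximize $|K'(x)|$. By symmetry it is enough to consider $x \ge 0$ and maximize $g(x) = (x/\sigma^2)\exp(-x^2/2\sigma^2)$. Setting $g'(x)=0$ yields the condition $1 - x^2/\sigma^2 = 0$, i.e.\ the unique positive critical point $x = \sigma$. Evaluating there gives
\[
g(\sigma) = \frac{1}{\sigma}\exp\!\left(-\tfrac12\right) = \frac{1}{\sigma\sqrt{e}}.
\]
Since $\sqrt{e} > 1$, this shows $\sup_x |K'(x)| = 1/(\sigma\sqrt{e}) \le 1/\sigma$, which combined with the mean value theorem step establishes that $K$ is $1/\sigma$-Lipschitz.

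There is essentially no genuine obstacle here; the entire argument is a one-variable derivative bound. The only point requiring a moment's care is verifying that the interior critical point $x=\sigma$ is indeed the global maximum of $|K'|$ (as opposed to the boundary behavior $x \to 0$ or $x \to \infty$, where $|K'| \to 0$), so that the supremum is attained and equals $1/(\sigma\sqrt{e})$. It is worth remarking that this computation in fact yields the slightly stronger constant $1/(\sigma\sqrt{e})$; the weaker bound $1/\sigma$ stated in the lemma suffices for the subsequent analysis and is cleaner to carry through, so I would state the proof to conclude the $1/\sigma$ bound directly.
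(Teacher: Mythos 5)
Your proof is correct and follows essentially the same route as the paper: differentiate $K$, locate the maximum of $|K'|$ at $x=\pm\sigma$ via the second derivative (equivalently, the critical point of $|K'|$), and conclude $\sup_x|K'(x)| = e^{-1/2}/\sigma \le 1/\sigma$. The only differences are cosmetic --- you make explicit the mean value theorem step and the check that the interior critical point is the global maximum, both of which the paper leaves implicit.
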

\begin{proof}
By taking the first derivative of $K$ with respect to $x$, we have
$
\frac{dK(x)}{dx} = \exp(-\frac{x^2}{2\sigma^2})(-\frac{x}{\sigma^2} ).
$
Take the second derivative 
$
\frac{d^2K(x)}{dx^2} = \exp(-\frac{x^2}{2\sigma^2})(\frac{x^2}{\sigma^4}  - \frac{1}{\sigma^2})
$
and set $\frac{d^2K(x)}{dx^2} = 0$.   We get $x = \pm \sigma$, and thus $|\frac{dK(x)}{dx}| $ has the maximum values on $x = \pm \sigma$, equals to $  \exp(-\frac{1}{2})(\frac{1}{\sigma}) \le 1/\sigma $. So a unit Gaussian kernel is $1/\sigma$-Lipschitz. 
\end{proof}

\subsection{Accuracy of Random Sampling}
We start by analyzing how kernel regression is preserved under random sampling.  In many cases the ``input'' data to a problem should actually be modeled as a random sample of some much larger set, 
or it may be done as a first pass on data to reduce its complexity.  

The key structural result will be on sampling weighted sets.   

\begin{lemma}
For a weighted point set $(P,w)$ with $P \subset \b{R}^d$ of size $n$, then a random sample of points $Q \subset P$ of size $s = O((1/\eps^2) (d+  \log(1/\delta)))$, with probability at least $1-\delta$, satisfies for any $B \in \c{B}$
\[
\left| \frac{1}{n} \sum_{p \in P \cap B} w(p) - \frac{1}{s} \sum_{p \in Q \cap B} w(p) \right| \leq \eps M,
\]
where $M = \max_{p \in X} w(p) - \min_{p \in X} w(p)$.  
\label{lem:weighted-sample}
\end{lemma}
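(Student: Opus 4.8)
The plan is to exhibit the sample $Q$ as an ordinary $\eps$-approximation of a single, cleverly chosen range space, and then read off the weighted guarantee from it. First I would normalize the weights. Subtracting $\min_{p\in P} w(p)$ from every weight leaves $M=\max_{p} w(p)-\min_{p} w(p)$ unchanged and makes all weights nonnegative, and then dividing through by $M$ rescales them into $[0,1]$ and turns the target bound $\eps M$ into $\eps$. So it suffices to prove, for weights $w:P\to[0,1]$, that $\left|\frac1n\sum_{p\in P\cap B} w(p)-\frac1s\sum_{p\in Q\cap B} w(p)\right|\le \eps$ for every $B\in\c{B}$.

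The key idea is to write each weighted sum as an integral of \emph{unweighted} range counts over a threshold, so that one approximation controls them all at once. Using $w(p)=\int_0^1 \mathbf{1}[w(p)\ge t]\,dt$ and exchanging sum and integral, $\frac1n\sum_{p\in P\cap B} w(p)=\int_0^1 \frac{1}{n}\big|P\cap R_{B,t}\big|\,dt$, where $R_{B,t}=\{p\in P: p\in B,\ w(p)\ge t\}$. Lifting each point to $(p,w(p))\in\b{R}^{d+1}$, the set $R_{B,t}$ is exactly the trace on the lifted data of the region $B\times[t,\infty)$. Writing $\c{R}'$ for the family of all such regions and subtracting the analogous identity for $Q$, the left-hand side is at most $\int_0^1 \max_{R\in\c{R}'}\left|\frac{|P\cap R|}{n}-\frac{|Q\cap R|}{s}\right| dt$; hence any $\eps$-approximation of $(P,\c{R}')$ yields the bound $\eps$ directly.

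It then remains to bound the VC-dimension of $(\b{R}^{d+1},\c{R}')$ and to invoke the standard random-sampling construction. Each range of $\c{R}'$ is the intersection of a ball-cylinder $B\times\b{R}$, whose VC-dimension equals that of balls in $\b{R}^d$ (namely $d+1$, since the weight coordinate is unconstrained), with a single axis-parallel threshold $\{\,\cdot\ge t\}$, a one-parameter family of VC-dimension $1$. By the standard bound on the VC-dimension of intersections of ranges this is still $O(d)$, so the cited result on $\eps$-approximations guarantees that a uniform sample of size $s=O((1/\eps^2)(d+\log(1/\delta)))$ is an $\eps$-approximation of $(P,\c{R}')$ with probability at least $1-\delta$. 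This is exactly the claimed sample size, and combined with the previous paragraph it completes the proof.

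I expect the VC-dimension step to be the main obstacle: the crude shatter-function estimate for $\c{R}'$ only gives polynomial degree $d+2$, which translates to $O(d\log d)$, and obtaining the clean $O(d)$ requires using that the extra constraint is a single monotone threshold on one coordinate rather than a generic halfspace. A second, more delicate point is the normalization: subtracting a constant $c$ from all weights shifts the left-hand side by $c$ times the discrepancy in the plain counts $\big|\frac{|P\cap B|}{n}-\frac{|Q\cap B|}{s}\big|$, which is the $t\to 0$ slice of the integral above and is therefore itself governed by the same $\eps$-approximation; one must keep track of this term (it is the reason the statement is most naturally read with the weights already taken nonnegative) rather than treating the shift as entirely free.
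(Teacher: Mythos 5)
Your proof is correct, and it takes a genuinely different route from the paper's. Both arguments lift each point to $(p,w(p)) \in \b{R}^{d+1}$ and reduce the weighted claim to an unweighted $\eps$-approximation guarantee for a product-style range space of VC dimension $O(d)$, but they handle the weights differently. The paper discretizes the normalized weight range $[0,1]$ into $c/\eps$ slabs, rounds each weight to a slab value, and bounds the per-slab count errors $\alpha_i$ \emph{in aggregate} via prefix-interval ranges (implicitly an Abel-summation step), paying an extra rounding term that forces $c \geq 4$. Your layer-cake identity $w(p) = \int_0^1 \mathbf{1}[w(p) \geq t]\,dt$ replaces that entire mechanism: the weighted discrepancy becomes an average over $t$ of discrepancies of ranges $B \times [t,\infty)$, so a single $\eps$-approximation of that family gives the bound with no rounding error and no bookkeeping; this is cleaner and arguably more rigorous than the paper's chain of inequalities. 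Your first caveat (VC dimension) is resolvable: your ranges are intersections of two families of constant-bounded VC dimension (cylinders over balls, and one-sided thresholds), and the constant-fold intersection lemma of Blumer--Ehrenfeucht--Haussler--Warmuth keeps this $O(d)$; note the paper simply \emph{asserts} $O(d)$ for its larger family of ball-times-interval hypercylinders. Your second caveat exposes a gap in the \emph{paper's} proof, not yours: the paper treats the shift of weights as free (``change the units''), but as you observe the shift contributes the shift constant times the plain count discrepancy, so the lemma as literally stated fails, e.g., for constant weights $w \equiv C \neq 0$, where $M = 0$ yet the left-hand side is $C$ times a generically nonzero count discrepancy. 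The statement is correct under the reading $\min_p w(p) = 0$ (or with a bound of the form $\eps \max_p |w(p)|$), which is how your proof, with the shift term tracked, effectively reads it; the paper's downstream uses, with $p_y$ rescaled into $[1,2]$, lose only a constant factor.
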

\begin{proof}
First assume $\max_{p \in X} w(p) = 1$ and that $\min_{p \in X}  w(p) = 0$; then $M = 1$.  
Otherwise, we can simply ``change the units" by uniformly shifting and scaling all $w$ values to reach this scenario.  

We first consider $(X,w)$ as a point set $P \subset \b{R}^{d+1}$, where the $y$-coordinate is $w(p)$.  Then we consider the range space $(P,\c{R})$ where $\c{R}$ defines the set of subsets induced by ranges which are balls in the first $d$ coordinates, and an interval in the $y$-coordinate; we refer to them as hypercylinders.  The range space has VC-dimension $O(d)$.  
For a given query $B \in \c{B}$ on $X$ ($B$ is the ball in $\b{R}^d$ on the $x$-coordinates), we are interested hypercylinders $R \in \c{R}$ so that the $x$-coordinates are restricted to those in our query choice of $B$.  

In fact, we can break the hypercylinder $R$, which implicitly has a $y$-interval of $[0,1]$, up into $\eta = c/\eps$ disjoint hypercylinders (design constant $c$ so that $c/\eps$ is an integer), each with the same ball $B$ in $x$-coordinates and a $y$ width of $\eps/c$.  Let $P_i$ be the set $P$ restricted to $i$th such $y$ interval.  We can round all values within the interval to a value $v_i = i \cdot (c/\eps)$, incurring at most $\eps/c$ error.  
Then if each $i$th piece's sample $Q_i$ is off in count by $\alpha_i$ and $|\sum_i \alpha_i |\leq \eps n/2$, then we can say the total error is at most $|P| \eps/c + n \eps/2$.  Setting $c \geq 2$, ensures the total as is at most $\eps n$ as desired.  

However, \emph{individually} bounding each $\alpha_i$ to be small is hard.  If there are few points in one of the levels, then we get a poor estimate on the count in $Q_i$ using standard techniques.  
Instead we can bound $\sum_i \alpha_i$ \emph{in aggregate}.  
By the definition of $\eps$-samples, if $Q$ is an $\eps/2$-sample of $(P,\c{R})$, then $|\sum_{r = i}^j \alpha_i| \leq \eps/2 \cdot n$ for all $i,j \in [1,\eta]$.  And this holds by our random sample with probability at least $1-\delta$.  

Now we can write the total error from $(P,w)$ to $(Q,w)$ in an ball $B \in \c{B}$ as
\begin{align*}
\frac{1}{n} \sum_{p \in P \cap I} w(p)
 &=
\frac{1}{n} \sum_{i=1}^\eta \sum_{p \in P_i \cap B} w(p)
\\ &\leq
\frac{1}{n} \sum_{i=1}^\eta \sum_{p \in P_i \cap B} \left(v_i  + \eps /c\right)
\\ &=
\frac{\eps}{c} + \frac{1}{n} \sum_{i=1}^\eta v_i |P_i \cap B|
\\ & \leq
\frac{\eps}{c}  + \frac{1}{n} \sum_{i=1}^\eta v_i \left(\alpha_i + \frac{n}{s} |Q_i \cap B|\right)
\\ &=
\frac{\eps}{c} + \frac{1}{n} \sum_{i=1}^\eta v_i \alpha_i + \frac{1}{s} \sum_{1=1}^\eta \sum_{p \in Q_i \cap B} v_i
\\ &\leq
\frac{\eps}{c} + \frac{1}{n} \sum_{i=1}^\eta \alpha_i + \frac{1}{s} \sum_{i=1}^\eta \sum_{p \in Q_i \cap B} (w(p) + \eps/c)
\\ & \leq
\frac{2 \eps}{c} + \frac{\eps}{2} + \frac{1}{s}  \sum_{p \in Q \cap B} w(p).
\end{align*}
Setting $c \geq 4$, and repeating the argument symmetrically to show the lower bound, we obtain that for any $B \in \c{B}$
\[
\left| \frac{1}{n} \sum_{p \in P \cap B} w(p) - \frac{1}{s} \sum_{p \in Q \cap B} w(p) \right| \leq \eps. \qedhere
\]
\end{proof}

This results generalizes to weighted kernel density estimates, for centrally-symmetric and non-increasing (as function of distance from center) kernels, following~\cite{JoshiKommarajuPhillips2011}.  The only change is using the weighted bound in Lemma \ref{lem:weighted-sample}, in place of where Joshi \emph{et.al.} used the unweighted bound in the definition of a ball-range space linked with the aforementioned kernels.  

\begin{theorem}
\label{thm:wkde-samp}
Consider any kernel $K : \b{R}^d \times \b{R}^d \to \b{R}^+$ linked to $(\b{R}^d, \c{B})$.  
For a weighted point set $(X,w)$ with $X \subset \b{R}^d$, then a random sample $Q \subset X$ of size $s = O((1/\eps^2) (d + \log(1/\delta)))$, with probability at least $1-\delta$, for any $x \in \b{R}^d$ satisfies 
\[
\left| \wkde_{X,w}(x) - \wkde_{Q,w}(x) \right| \leq \eps M,
\]
where $M = \max_{p \in X} w(p) - \min_{p \in X} w(p)$.  
\end{theorem}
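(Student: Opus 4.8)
The plan is to follow the linking technique of \cite{JoshiKommarajuPhillips2011}, substituting their unweighted $\eps$-sample bound with the weighted guarantee just established in Lemma \ref{lem:weighted-sample}. The starting observation is that, because the kernel is normalized to take values in $[0,1]$ (as assumed throughout), each evaluation admits a layer-cake representation
\[
K(x,p) = \int_0^1 \mathbf{1}[K(x,p) \geq \tau] \, d\tau.
\]
Substituting this into the definition of $\wkde_{X,w}(x)$ and exchanging the finite sum with the integral gives
\[
\wkde_{X,w}(x) = \int_0^1 \frac{1}{n} \sum_{p \in X,\, K(x,p) \geq \tau} w(p) \, d\tau,
\]
and likewise for $\wkde_{Q,w}(x)$ with normalization $1/s$. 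Thus the quantity I must bound is the integral over $\tau$ of the difference of two weighted sums restricted to the superlevel set $\{p : K(x,p) \geq \tau\}$.

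Next I would invoke the linking hypothesis. Since $K$ is linked to $(\b{R}^d, \c{B})$, for every center $x$ and threshold $\tau$ the superlevel set $\{p \in X : K(x,p) \geq \tau\}$ equals $B_\tau \cap X$ for some ball $B_\tau \in \c{B}$. This step is exactly what replaces the continuum of kernel thresholds by a family of ball ranges, so that the weighted sampling bound becomes applicable to each $B_\tau$.

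I then apply Lemma \ref{lem:weighted-sample}. With probability at least $1-\delta$, the single random sample $Q$ of size $s = O((1/\eps^2)(d + \log(1/\delta)))$ satisfies, \emph{simultaneously for all} $B \in \c{B}$,
\[
\left| \frac{1}{n} \sum_{p \in X \cap B} w(p) - \frac{1}{s} \sum_{p \in Q \cap B} w(p) \right| \leq \eps M.
\]
In particular this holds for each $B_\tau$, so the integrand above is bounded by $\eps M$ in absolute value for every $\tau \in [0,1]$; integrating over $\tau \in [0,1]$ then yields $|\wkde_{X,w}(x) - \wkde_{Q,w}(x)| \leq \eps M$. Because the argument depends on $x$ only through the balls $B_\tau$, and the ball bound is uniform over $\c{B}$, the conclusion holds for every query $x \in \b{R}^d$ at once.

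The crux — and the only place the proof could fail — is the uniformity of Lemma \ref{lem:weighted-sample} across the entire range space. A single realization of $Q$ must control \emph{all} balls $B_\tau$ arising from all centers $x$ and all thresholds $\tau$; only because the weighted sampling bound is a worst-case guarantee over $\c{B}$ (obtained via the bounded VC-dimension of the hypercylinder range space) can I pull the bound inside the integral and thereby avoid a hopeless union bound over a continuum of thresholds. The remaining ingredients — the layer-cake identity, the exchange of the finite sum with the integral, and the final integration — are routine.
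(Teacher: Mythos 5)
Your proof is correct, but it is a genuinely different argument from the one the paper gives. The paper proves Theorem~\ref{thm:wkde-samp} by reduction to the proof technique of Joshi \emph{et al.}~\cite{JoshiKommarajuPhillips2011}: sort the points of $X$ by kernel similarity to the query, partition them into blocks of $g = n/s$ consecutive points charged to sample points, and bound the undercount and overcount separately, invoking the range-space guarantee on superlevel-set ranges at the key step --- exactly the style of combinatorial charging argument written out in Appendix~\ref{ap:link} for the relative $(\rho,\eps)$ setting --- with the single substitution that the unweighted $\eps$-approximation bound is replaced by the weighted ball bound of Lemma~\ref{lem:weighted-sample}. Your layer-cake identity $K(x,p) = \int_0^1 \mathbf{1}[K(x,p)\geq \tau]\, d\tau$ collapses all of that bookkeeping into one integration: linking turns each threshold $\tau$ into a ball $B_\tau$, the uniform-over-$\c{B}$ guarantee of Lemma~\ref{lem:weighted-sample} bounds the integrand for every $\tau$ simultaneously, and integrating over $\tau \in [0,1]$ finishes. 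This is shorter and more transparent (you correctly identify that the entire burden rests on the uniformity of the lemma over $\c{B}$), it avoids re-verifying that the charging scheme still balances when points carry weights (fractional charges, undercount/overcount cases), and it makes explicit where the normalization $K \leq 1$ enters --- for an unnormalized kernel the same argument yields $\eps M K^+$, consistent with the $K^+$ factor in Theorem~\ref{thm:link}. One small step you should state explicitly: since $Q \subseteq X$, the same ball $B_\tau$ satisfying $B_\tau \cap X = \{p \in X : K(x,p) \geq \tau\}$ also satisfies $B_\tau \cap Q = \{q \in Q : K(x,q) \geq \tau\}$; this one-line observation is what makes your integrand exactly the quantity controlled by Lemma~\ref{lem:weighted-sample}, and without it the two integrals are not visibly taken over the same ranges.
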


Now we are ready to show the main result.  

\vspace{-.05in}

\begin{theorem}
Consider a point set $P \subset \b{R}^{d+1}$ of arbitrary size, and parameters $\rho,\eps \in (0,1)$.  
Let $S$ be a uniform sample from $P$ of size $O(\frac{1}{\eps^2 \rho^2}( d \log(1/\rho) + \log(2/\delta)))$, with probability at least $1-\delta$, the set $S$ is a $(\rho,\eps)$-approximation for kernel regression on $P$. 
\end{theorem}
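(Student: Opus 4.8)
The plan is to reduce the statement to Lemma \ref{lm:kr} by exhibiting a single uniform sample $S$ of the stated size that simultaneously controls both the $\kde$ error (in the relative sense) and the $\wkde$ error (in the additive $\beta M$ sense). Once both hypotheses of Lemma \ref{lm:kr} hold with parameters $\alpha$ and $\beta$, that lemma immediately gives $|\reg_S(q) - \reg_P(q)| \le 4(\alpha + \beta/\rho) M$ for every $q$ with $\kde_P(q) \ge \rho$, and I would set $\alpha = \eps/8$ and $\beta = \eps \rho / 8$ so that $4(\alpha + \beta/\rho) = \eps$, yielding exactly the desired $\eps M$ bound.

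For the $\kde$ condition I would not sample against the kernel directly but against the linked ball range space $(\b{R}^d, \c{B})$, exploiting that Gaussians are linked to balls. Taking $S$ to be a relative $(\rho, \alpha/2)$-approximation of $(P, \c{B})$ — constructible by a uniform sample of size $O(\frac{1}{\rho \alpha^2}(d \log(1/\rho) + \log(1/\delta)))$ — Theorem \ref{thm:link} promotes it to a $(\rho K^+, \alpha)$-approximation of $(P, K)$. Since $K^+ = \max_{p,q} K(p,q) = 1$ for the Gaussian kernel, this is precisely $|\kde_P(q) - \kde_S(q)| \le \alpha \max\{\kde_P(q), \rho\}$, the first hypothesis of Lemma \ref{lm:kr}.

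For the $\wkde$ condition I would apply Theorem \ref{thm:wkde-samp} with the weight $w(p) = p_y$, so that $M = \max_p w(p) - \min_p w(p)$ coincides with the $M$ in the theorem; a sample of size $O(\frac{1}{\beta^2}(d + \log(1/\delta)))$ then gives $|\wkde_P(q) - \wkde_S(q)| \le \beta M$. The one point requiring care is that both events must hold for the \emph{same} sample: I would take $S$ of size equal to the larger of the two requirements, split the failure probability into $\delta/2$ each, and union bound, using that both guarantees are monotone in the sample size (a larger sample is at least as good). The main obstacle is really just bookkeeping the two sample sizes. Substituting $\alpha = \eps/8$ and $\beta = \eps\rho/8$ turns the $\kde$ requirement into $O(\frac{1}{\rho \eps^2}(d \log(1/\rho) + \log(2/\delta)))$ and the $\wkde$ requirement into $O(\frac{1}{\eps^2 \rho^2}(d + \log(2/\delta)))$; the extra factor $1/\rho^2$ (rather than $1/\rho$) arises because $\beta$ must shrink by an additional factor of $\rho$ to absorb the division by $\rho$ in Lemma \ref{lm:kr}. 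Their maximum is bounded by $O(\frac{1}{\eps^2 \rho^2}(d \log(1/\rho) + \log(2/\delta)))$, matching the claimed size.
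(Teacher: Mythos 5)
Your proposal is correct and follows essentially the same route as the paper's own proof: a relative $(\rho,\cdot)$-approximation of the linked ball range space promoted via Theorem \ref{thm:link} to handle the $\kde$ condition, Theorem \ref{thm:wkde-samp} with $w(p)=p_y$ and error parameter $\Theta(\eps\rho)$ to handle the $\wkde$ condition, a union bound over the two $\delta/2$ failure events for the same sample, and Lemma \ref{lm:kr} to combine them. The only differences are cosmetic constant choices ($\alpha=\eps/8$, $\beta=\eps\rho/8$ versus the paper's $\eps'=\eps/16$ giving $\alpha=2\eps'$, $\beta=\eps'\rho$), and your explicit remark on monotonicity of the guarantees in sample size is a small bookkeeping point the paper leaves implicit.
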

\vspace{-.05in}
\begin{proof}
For a binary range space (such as $(P, \c{B})$) with constant VC-dimension~\cite{VC71} $\nu$, a random sample $S$ of size $k = O(\frac{1}{(\eps')^2 \rho} (\nu \log(1/\rho) + \log(2/\delta)))$ provides an $(\rho, \eps')$-sample with probability at least $1 - \delta/2$ \cite{LLS01, har2011geometric}. 
Theorem \ref{thm:link} gives a linking result for kernel density estimate, implying that this is also a relative $(\rho,2\eps')$-coreset for a kernel where $K(x,x) = 1$.  
This satisfies the first condition of Lemma \ref{lm:kr} with $\alpha = 2\eps'$.  

Second we invoke Theorem \ref{thm:wkde-samp} so that we have with probability at least $1-\delta/2$ that 
$
|\wkde_P(q) - \wkde_S(q)| \leq (\eps' \rho) M, 
$
hence satisfying the second condition of Lemma \ref{lm:kr} with $\beta = \eps' \rho$.  

Setting $\eps' = \eps/16$ invoking Lemma \ref{lm:kr}, then with probability at least $1-(\delta/2 +\delta/2) = 1-\delta$, for any $q \in \b{R}^d$ that 
$
|\reg_S(q) - \reg_P(q)| \leq 4 (\eps/8 + (\eps \rho / 16)/\rho) M \leq \eps M. 
$
\end{proof}

\subsection{Accuracy of Grid-Based Approaches}
We first bound the error in \textbf{Grid}.  This implies other related algorithms (\textbf{G-Aggregate}, \textbf{Aggregate-Neighbor}) will have have the same asymptotic error bounds for $d$ constant.  

\begin{theorem}
\textbf{Grid} with $\gamma = \frac{\eps \sigma \rho}{8 \sqrt{d}}$ produces $S$, a $(\rho, \eps)$-coreset for the kernel regression of $P \subset \mathbb{R}^{d+1}$.  
\end{theorem}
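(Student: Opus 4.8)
The plan is to invoke Lemma~\ref{lm:kr}, which reduces the entire statement to two separate approximation guarantees: a relative bound $|\kde_P(q)-\kde_S(q)| < \alpha \max\{\kde_P(q),\rho\}$ on the density estimate, and an absolute bound $|\wkde_P(q)-\wkde_S(q)| < \beta M$ on the weighted density estimate. I will aim for $\alpha = \eps/8$ and $\beta = \eps\rho/8$, since then Lemma~\ref{lm:kr} yields $|\reg_P(q)-\reg_S(q)| \le 4(\alpha+\beta/\rho)M = \eps M$ for every $q$ with $\kde_P(q)\ge\rho$, which is exactly the claimed $(\rho,\eps)$-coreset property. So the real work is to verify these two conditions for the grid construction with $\gamma = \eps\sigma\rho/(8\sqrt d)$.

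First I would handle the KDE condition, which is clean and fully deterministic. Each non-empty cell $g$ contributes to $\kde_S$ a single representative $s_g$ weighted by $|P_g|$, and every $p\in P_g$ lies in the same cell, so $\|p_x-s_{g,x}\| \le \gamma\sqrt d$ (the diameter of a $d$-cube of side $\gamma$). Applying the $1/\sigma$-Lipschitz bound of Lemma~\ref{lm:glip} termwise gives
\[
|\kde_P(q)-\kde_S(q)| \le \frac{1}{|P|}\sum_{g}\sum_{p\in P_g}\frac{\|p_x-s_{g,x}\|}{\sigma} \le \frac{\gamma\sqrt d}{\sigma} = \frac{\eps\rho}{8}.
\]
Since $\max\{\kde_P(q),\rho\}\ge\rho$, this is at most $\tfrac{\eps}{8}\max\{\kde_P(q),\rho\}$, so the first condition of Lemma~\ref{lm:kr} holds with $\alpha=\eps/8\le 1/2$, uniformly in $q$ and independent of any random choices.

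For the WKDE condition I would split the per-cell error into a kernel-displacement piece $\sum_{p\in P_g}(K(p_x,q)-K(s_{g,x},q))\,p_y$ and a representative-value piece $K(s_{g,x},q)\bigl(\sum_{p\in P_g}p_y - |P_g|\,s_{g,y}\bigr)$. After shifting the $p_y$ to a range of width $M$, the displacement piece is controlled exactly as above by Lemma~\ref{lm:glip}, contributing at most $(\gamma\sqrt d/\sigma)M = (\eps\rho/8)M$ in total, which already exhausts the target budget $\beta M = (\eps\rho/8)M$. The representative-value piece is the crux and the main obstacle: a single point's value $s_{g,y}$ must stand in for the whole cell. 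This piece is \emph{identically zero} for \textbf{G-Aggregate}, since there $s_{g,y}=\frac{1}{|P_g|}\sum_{p\in P_g}p_y$ is precisely the cell average, so the displacement bound alone certifies $\beta=\eps\rho/8$ and the claimed $\eps M$ bound holds exactly; this is why a guarantee proved here transfers to \textbf{G-Aggregate} and, with its few extra boundary points, to \textbf{Aggregate-Neighbor}. For \textbf{Grid} the random representative makes each such term mean-zero, and I would control its aggregate contribution either in expectation or with high probability via the weighted-sampling machinery of Lemma~\ref{lem:weighted-sample} (at the cost of a slightly relaxed constant). With both WKDE pieces in hand, feeding $\alpha=\eps/8$ and $\beta=\eps\rho/8$ into Lemma~\ref{lm:kr} closes the argument; the only step requiring genuine care is reconciling the single representative $y$-value with the cell's values, which the deterministic averaging variant sidesteps entirely.
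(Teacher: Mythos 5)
Your overall route is the same as the paper's: invoke Lemma~\ref{lm:kr} with $\alpha=\eps/8$ and $\beta=\eps\rho/8$, and obtain the KDE condition from the $1/\sigma$-Lipschitz bound of Lemma~\ref{lm:glip} applied to a per-cell displacement of $\gamma\sqrt{d}$. Your handling of the WKDE condition is in fact \emph{more} careful than the paper's own proof: the paper simply notes $|p_y-s_y|\le M$ and concludes $|\wkde_P(q)-\wkde_S(q)|\le \gamma\sqrt{d}\,M/\sigma$, which silently drops exactly the representative-value term you isolate (that term enters with coefficient $K(s_{g,x},q)\le 1$, not $\gamma\sqrt{d}/\sigma$, so it is not controlled by the Lipschitz argument). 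You are right that this term vanishes identically for \textbf{G-Aggregate}, and for that method your argument is complete and matches the paper's intended calculation.

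However, for \textbf{Grid} --- which is the method the stated theorem is actually about --- your deferred step cannot be carried out, and not merely at the cost of a relaxed constant. Lemma~\ref{lem:weighted-sample} concerns a uniform sample of tunable size $s$ drawn from all of $P$; here there is exactly one random representative per cell, with no size parameter to increase, so no concentration is available. Concretely: place all of $P$ at a single location $x_0$, half the points with $p_y=0$ and half with $p_y=M$. All of $P$ falls in one cell, so \textbf{Grid} outputs a single weighted point whose value is $0$ or $M$, each with probability $1/2$; then $\reg_S(q)\in\{0,M\}$ while $\reg_P(q)=M/2$, so $|\reg_P(q)-\reg_S(q)|=M/2$ for every $q$ (including $q=x_0$, where $\kde_P(q)=1>\rho$), regardless of $\gamma$. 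Mean-zero-ness does not help, since the $(\rho,\eps)$-coreset guarantee is worst case and here the error is $M/2$ with probability one. So the representative-value term is not just "the crux"; it is a counterexample showing the guarantee genuinely requires cell-averaged values. The honest conclusion of your analysis is that the theorem holds for \textbf{G-Aggregate} (and, with its extra boundary points, \textbf{Aggregate-Neighbor}) but fails for \textbf{Grid} as defined; the paper's own proof glosses over precisely this point, and your decomposition is what exposes it.
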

\begin{proof} 
We will prove bounds on both error in $\kde_P$ and $\wkde_P$ separately, then combine them with Lemma \ref{lm:kr}.  
This algorithm maps all points $P_g$ for a grid cell $g \in G_\gamma$ to a single point, and by reweighting, changes each points location by at most $\gamma \sqrt{d}$.  Using that $K$ is $(1/\sigma)$-Lipschitz, this changes $\kde_P$ by at most $\gamma \sqrt{d} / \sigma$ in $\kde_S$.  Only considering $\kde_P(q) \geq \rho$, then 
$|\kde_P(q) - \kde_S(q)| \leq \frac{\gamma \sqrt{d}}{\rho \sigma} \max \{\rho, \kde_P(q)\}$.  

For $\wkde_P$ the analysis is similar, but we may also replace $p_y$ for a point $p \in P_g$ with a different $s_y$.  We can bound $|p_y - s_y| \leq M = \max_{p,p' \in P} |p_y - p'_y|$.  Hence for all $q \in \mathbb{R}^d$, then $|\wkde_P(q) - \wkde_S(q)| \leq \gamma \sqrt{d} M/ \sigma$.  

Combining these two bounds together with Lemma \ref{lm:kr} we obtain (for $q$ with $\kde_P(q) \geq \rho$) that 
$
|\reg_P(q) - \reg_S(q)| 
\leq 
4 (\frac{\gamma \sqrt{d}}{\rho \sigma} + \frac{\gamma \sqrt{d}}{\sigma}/\rho) M 
=
4 (\frac{\eps}{8} + \frac{\eps \rho}{8}/\rho) M
= 
\eps M.  
$
\end{proof}

\noindent
By aggregating on each relevant grid cell, we bound coreset size with $\Delta = \max_{p,p'\in P} \|p_x - p'_x\|/\sigma$.  

\begin{corollary}
For $P \subset \mathbb{R}^{d+1}$ for constant $d$, methods \textbf{Grid}, \textbf{G-Aggregate}, and \textbf{Aggregate-Neighbor}, run in $O(|P|)$ time, and return $S$ a $(\rho, \eps)$-coreset for kernel regression of $P$ of size at most $O((\Delta/\eps \rho)^d)$.  
\end{corollary}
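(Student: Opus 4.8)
The plan is to establish the corollary in three parts: the coreset guarantee, the size bound, and the running time, reusing the preceding theorem for the hardest (accuracy) component. For the accuracy guarantee, I would observe that the preceding theorem already shows \textbf{Grid} with $\gamma = \eps\sigma\rho/(8\sqrt d)$ is a $(\rho,\eps)$-coreset, and that its proof only uses two facts about the aggregation step: every point is displaced within its cell by at most $\gamma\sqrt d$ (bounding the perturbation of $\kde_P$ via the $1/\sigma$-Lipschitz bound of Lemma \ref{lm:glip}), and each replaced scalar value differs from the original by at most $M$ (bounding the perturbation of $\wkde_P$). Since \textbf{G-Aggregate} places the cell centroid, which also lies in the cell, and assigns the mean $y$-value, which is within $M$ of every $p_y$, both facts hold verbatim, so the same Lemma \ref{lm:kr} computation yields a $(\rho,\eps)$-coreset. \textbf{Aggregate-Neighbor} is \textbf{G-Aggregate} augmented with a few exactly-valued points, so the argument carries over once we verify those additions do not degrade the bound.

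For the size bound I would use a direct grid-counting argument. By definition $\Delta = \max_{p,p'\in P}\|p_x - p'_x\|/\sigma$, so $P_x$ is contained in an axis-aligned box of side length at most $\Delta\sigma$ in each of the $d$ coordinates. With cell side length $\gamma = \eps\sigma\rho/(8\sqrt d)$, the number of grid cells meeting this box is at most $(\Delta\sigma/\gamma + 1)^d = O((\sqrt d\,\Delta/(\eps\rho))^d)$, which is $O((\Delta/\eps\rho)^d)$ for constant $d$. Both \textbf{Grid} and \textbf{G-Aggregate} emit exactly one point per non-empty cell, giving $|S| = O((\Delta/\eps\rho)^d)$. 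Each non-empty cell has at most $3^d-1$ grid neighbors, so \textbf{Aggregate-Neighbor} adds at most a $3^d$ factor of extra points, which for constant $d$ leaves the bound unchanged.

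For the running time, mapping each $p$ to its cell index and accumulating the per-cell sums needed for the centroid and mean value takes $O(1)$ per point (assuming integer or hashed cell indexing), hence $O(|P|)$ overall for \textbf{Grid} and \textbf{G-Aggregate}. The neighbor-augmentation step enumerates the $O(|S|)$ empty cells adjacent to occupied ones and evaluates the regression at each of their centers; since these boundary cells are few and can be scored against the already-compressed set, this contributes only lower-order overhead beyond the initial $O(|P|)$ sweep.

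The main obstacle is the \textbf{Aggregate-Neighbor} case. Unlike subsampling or aggregation, inserting new points alters both the numerator $\wkde_S$ and the denominator $\kde_S$ of $\reg_S$, so it is not automatic that appending the boundary points preserves the $(\rho,\eps)$ guarantee rather than breaking it; I would need to argue that each inserted center carries the locally correct regression value and contributes negligibly to $\kde$ outside a $\gamma$-neighborhood, so the worst-case error can only improve (as the toy example of Figure \ref{fig:syn_err} suggests). I would also be careful that the $O(|P|)$ runtime claim is clean for the aggregation methods, with the neighbor step's cost of evaluating the regression at the few boundary centers folded in under the mild assumption $|S| \le |P|$.
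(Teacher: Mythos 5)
Your proposal matches the paper's (largely implicit) justification: the paper gives no separate proof of this corollary, relying on the preceding theorem for accuracy (with the remark just before the theorem that \textbf{G-Aggregate} and \textbf{Aggregate-Neighbor} inherit the same bounds), on counting the non-empty cells of side $\gamma = \eps\sigma\rho/(8\sqrt{d})$ inside a box of side $\Delta\sigma$ for the size, and on constant-time cell assignment for the runtime --- exactly your three steps. The \textbf{Aggregate-Neighbor} subtleties you flag (that inserted centers perturb both $\wkde_S$ and $\kde_S$, and that computing $\reg_P(c)$ at the boundary centers is not obviously folded into $O(|P|)$ total time) are genuine, but the paper itself glosses over them with a one-sentence assertion, so your treatment is, if anything, more careful than the original.
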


\Paragraph{Accuracy of progressive grid-based methods}
If the size $\mathsf{width}(R_1)$ of the first region in the progressive methods is a constant, there are at most $O(\log \Delta)$ regions.  Set $\gamma = \eps \sigma \rho /8 \cdot a^{i-1}$, so each region has a grid with $O(1/\eps \rho)$ cells.  

\begin{corollary}
For $P \subset \mathbb{R}^2$, under any allowable view window of size $T$ and scaling so $T/\sigma$ is fixed, then the progressive Grid approach achieves an $(\rho, \eps)$-coreset for kernel regression of $P$ of size at most $O((1/\eps \rho) \log \Delta)$.  
\end{corollary}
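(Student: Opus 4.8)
The plan is to prove the two halves of the statement separately—the size bound and correctness as a $(\rho,\eps)$-coreset—and then combine them through Lemma \ref{lm:kr}. Throughout I take $P_x \subset \b{R}$ (so $P \subset \b{R}^2$ and the grid dimension is $d=1$), fix the first region to width $\mathsf{width}(R_1) = \Theta(\sigma)$ and resolution $\gamma_1 = \eps\sigma\rho/8$, and use the prescribed schedule $\gamma_i = a^{i-1}\gamma_1$, $\mathsf{width}(R_i) = a^{i-1}\mathsf{width}(R_1)$. For the size bound I would first count regions: since $\sum_{i=1}^r \mathsf{width}(R_i) = \mathsf{width}(R_1)\frac{a^r-1}{a-1}$ must cover the window of length $T$ and $\mathsf{width}(R_1) = \Theta(\sigma)$, solving gives $a^r = \Theta(T/\sigma) = \Theta(\Delta)$, hence $r = O(\log\Delta)$. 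Each region contributes at most $\mathsf{width}(R_i)/\gamma_i = \mathsf{width}(R_1)/\gamma_1 = O(1/\eps\rho)$ non-empty cells, each producing one coreset point, so $|S| \le r \cdot O(1/\eps\rho) = O((1/\eps\rho)\log\Delta)$.

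For correctness I would again reduce to Lemma \ref{lm:kr}: it suffices to exhibit $\alpha = O(\eps)$ with $\alpha \le 1/2$ and $\beta = O(\eps\rho)$ so that $|\kde_P(q) - \kde_S(q)| \le \alpha\max\{\kde_P(q),\rho\}$ and $|\wkde_P(q) - \wkde_S(q)| \le \beta M$, since then $|\reg_P(q) - \reg_S(q)| \le 4(\alpha + \beta/\rho)M \le \eps M$ after tuning constants. I would bound each error by summing over regions. Aggregation in region $R_i$ moves every point by at most $\gamma_i$ and changes its scalar by at most $M$, so the per-region contribution to the $\kde$ error is at most $\frac{|P\cap R_i|}{|P|}\max_{p\in P\cap R_i}|K(p_x,q) - K(s(p)_x,q)|$, with an extra factor $M$ for $\wkde$. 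For the region containing (or adjacent to) the query I would use the plain $1/\sigma$-Lipschitz bound of Lemma \ref{lm:glip}, which contributes at most $\gamma_1/\sigma = \eps\rho/8$; this is a verbatim repetition of the single-resolution \textbf{Grid} analysis. The entire difficulty is concentrated in the far, coarsely gridded regions.

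The main obstacle, and the only place the progressive schedule matters, is showing that the geometrically growing resolutions $\gamma_i$ in old regions do not spoil the bound. Here the global Lipschitz constant is too weak; instead I would use that for a query at the present, every point of region $R_i$ lies at distance $D_i = \Omega(\sigma a^{i-1})$ from $q$, and that $|K'(x)| = \frac{2x}{\sigma^2}\exp(-x^2/\sigma^2)$ is decreasing for $x \ge \sigma/\sqrt{2}$, so $|K(p_x,q) - K(s(p)_x,q)| \le \gamma_i \sup_{x\ge D_i}|K'(x)| = \gamma_i \frac{2D_i}{\sigma^2}\exp(-D_i^2/\sigma^2)$. Substituting $\gamma_i = a^{i-1}\gamma_1$ and $D_i = \Omega(\sigma a^{i-1})$ turns this into $\eps\rho \cdot \mathrm{poly}(a^{i-1})\exp(-\Omega(a^{2(i-1)}))$, whose sum over $i \ge 2$ is $O(\eps\rho)$ because the super-exponential Gaussian decay dominates the geometric growth of the resolution. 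Adding the near contribution $\eps\rho/8$ yields $\alpha, \beta/\rho = O(\eps)$, and Lemma \ref{lm:kr} closes the argument; this decay-versus-growth estimate is the delicate step, while everything else is the single-resolution argument applied region by region.
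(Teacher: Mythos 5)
Your size bound is exactly the paper's argument ($O(\log\Delta)$ regions, each contributing $O(1/\eps\rho)$ non-empty cells), but the accuracy half has a genuine gap: you never use the corollary's key hypothesis that the bandwidth scales with the view window (that $T/\sigma$ is fixed), and instead you fix $\sigma = \Theta(\mathsf{width}(R_1))$ once and prove the error bound only ``for a query at the present.'' A $(\rho,\eps)$-coreset must satisfy $|\reg_P(q) - \reg_S(q)| \leq \eps M$ for \emph{every} $q$ with $\kde_P(q) \geq \rho$. Under your fixed-$\sigma$ reading that universal statement is simply false: place the query inside an old region $R_i$ where the data is dense, so $\kde_P(q) \geq \rho$ holds; there the cells have width $\gamma_i = a^{i-1}\gamma_1 \gg \sigma$, so aggregation merges points that are many bandwidths apart (and may carry very different $y$-values) into a single location, and $\reg_S(q)$ can differ from $\reg_P(q)$ by $\Theta(M)$. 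Your Gaussian decay-versus-growth step cannot rescue this, because the offending coarse cells \emph{contain} the query rather than lying far from it. So what you prove is a different, weaker statement (finest bandwidth, queries at the present only), not the corollary.

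The paper's intended argument needs no decay estimate at all, because the scaling hypothesis does all the work. For an allowable window of size $T$ with bandwidth $\sigma = T/c$, the regions inside the window are $R_1, \ldots, R_j$; writing $\gamma_1 = \eps \sigma_1 \rho/8$ where $\sigma_1 = \mathsf{width}(R_1)/c$ is the bandwidth paired with the smallest window, the coarsest region in view satisfies $\gamma_j = a^{j-1}\gamma_1 = a^{j-1} \eps \sigma_1 \rho / 8 \leq \eps \sigma \rho / 8$, since $T \geq \mathsf{width}(R_1)(a^j - 1)/(a-1) \geq a^{j-1}\mathsf{width}(R_1)$ and hence $\sigma \geq a^{j-1}\sigma_1$. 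In other words, because widths and resolutions grow at the same geometric rate, \emph{every} cell inside the window meets the uniform resolution requirement of the single-resolution \textbf{Grid} theorem for that window's bandwidth, and that theorem then applies verbatim to all queries in the window; this one observation is the entire accuracy proof, and it holds simultaneously for every allowable window with a single coreset. Your decay computation is correct as far as it goes, and it is a nice complementary fact---coarsely gridded old data does not pollute queries near the present even at the finest bandwidth, which matches the paper's motivation of ``more error for older data''---but it neither uses nor establishes what the corollary actually asserts.
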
   

\Paragraph{Accuracy bounds for other methods}
Despite bounds for $|\kde_P(q) - \kde_S(q)|$ for other methods (e.g., \textbf{Z-order}) we are not able to show these for $\wkde_P$ and hence $\reg_P$.  For example, there exists simple examples with wildly varying density where the Z-order techniques approximates the density well, but not the function values.  Hence, we either cannot bound the accuracy or the size for this method.  

\section{Experiments}
\label{sec:exp}
Here we run an extensive set of experiments to validate our methods. We compare $\reg_P$ where $P_x \subset \b{R}^{1}$, $P_x \subset \b{R}^{2}$, and $P_x \subset \b{R}^6$ with kernel regression under smaller coreset $\reg_S$ for both synthetic and real data. To show our methods work well in large data sets, we use large real data set ($n = 2$ million and $24$ million) and synthetic data ($n = 1$ million) for $P_x \subset \b{R}^{1}$, and real data set ($n = 1$ million) for $P_x \subset \b{R}^{2}$.  Our algorithms scale well beyond these sizes, but evaluating error was prohibitive.  

\subsection{Data Sets}
For real data we consider ``Individual Household Electric Power Consumption" data set on UCI Machine Learning Repository. The number of instances is $2{,}075{,}259$, we use the first three attributes to do kernel regression. Date, time (together for $x$-value), and global active power (for $y$-value): household global minute-averaged active power (in kilowatt). 
This data set has gaps on the $x$-axis, and kernel regression does a nice job of interpolating those gaps. 

To demonstrate the effectiveness of progressive grids, we use a "CloudLab" dataset. CloudLab~\cite{ricci2014introducing} is cloud computing platform, and we have obtained a trace of power usage from the Utah site with $400$ million values.  We use the most recent $10$-month window which has size $24{,}351{,}363$.   

 \begin{figure*}[t!]
 \includegraphics[width=0.3\linewidth]{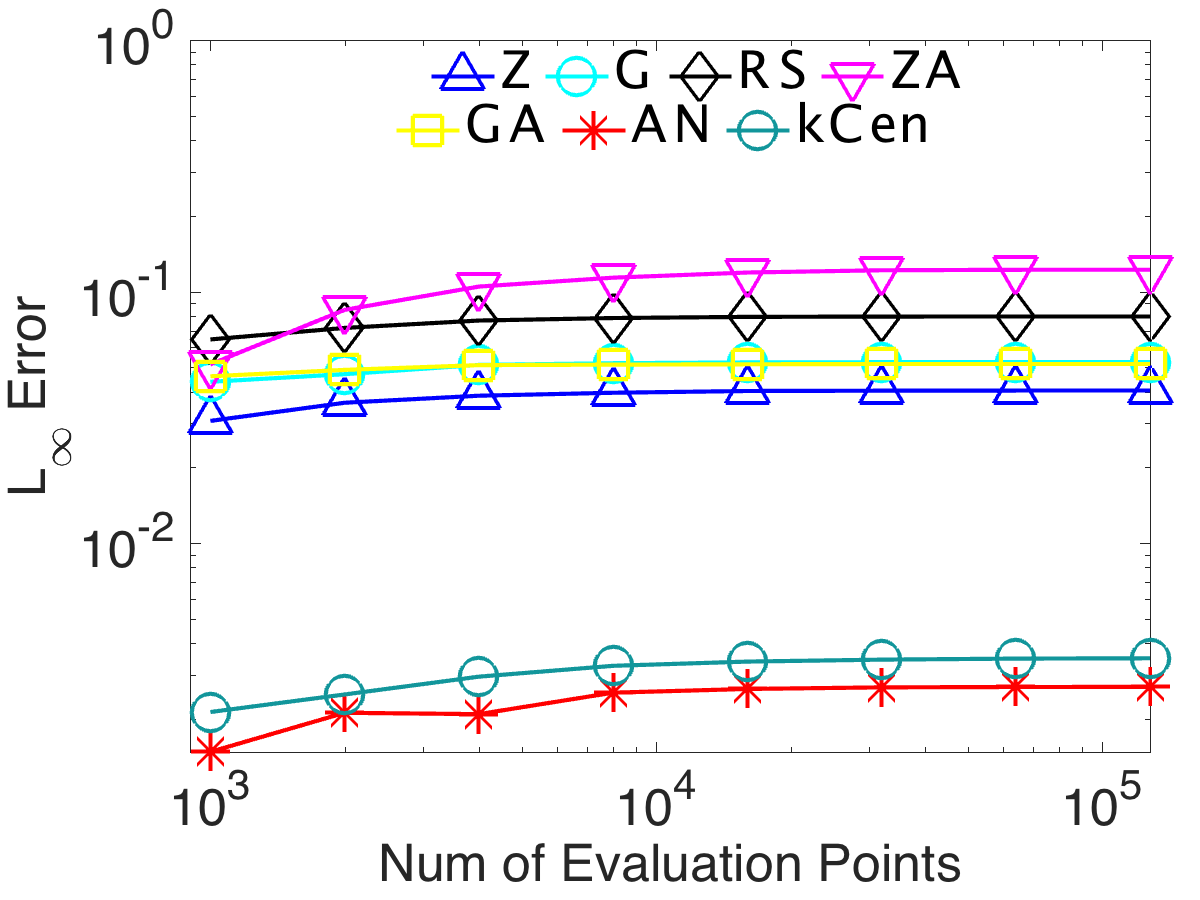}\hfill
 \includegraphics[width=0.3\linewidth]{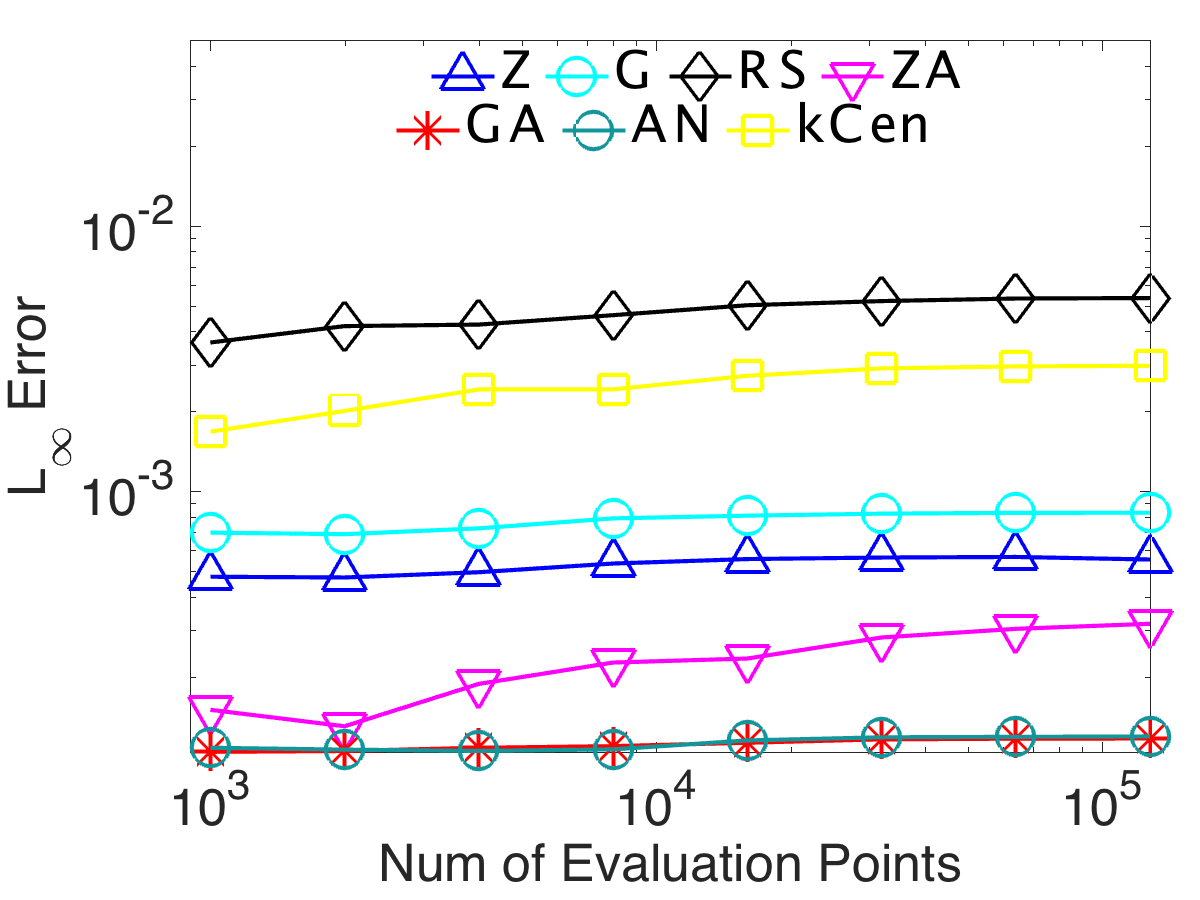}\hfill
 \includegraphics[width=0.3\linewidth]{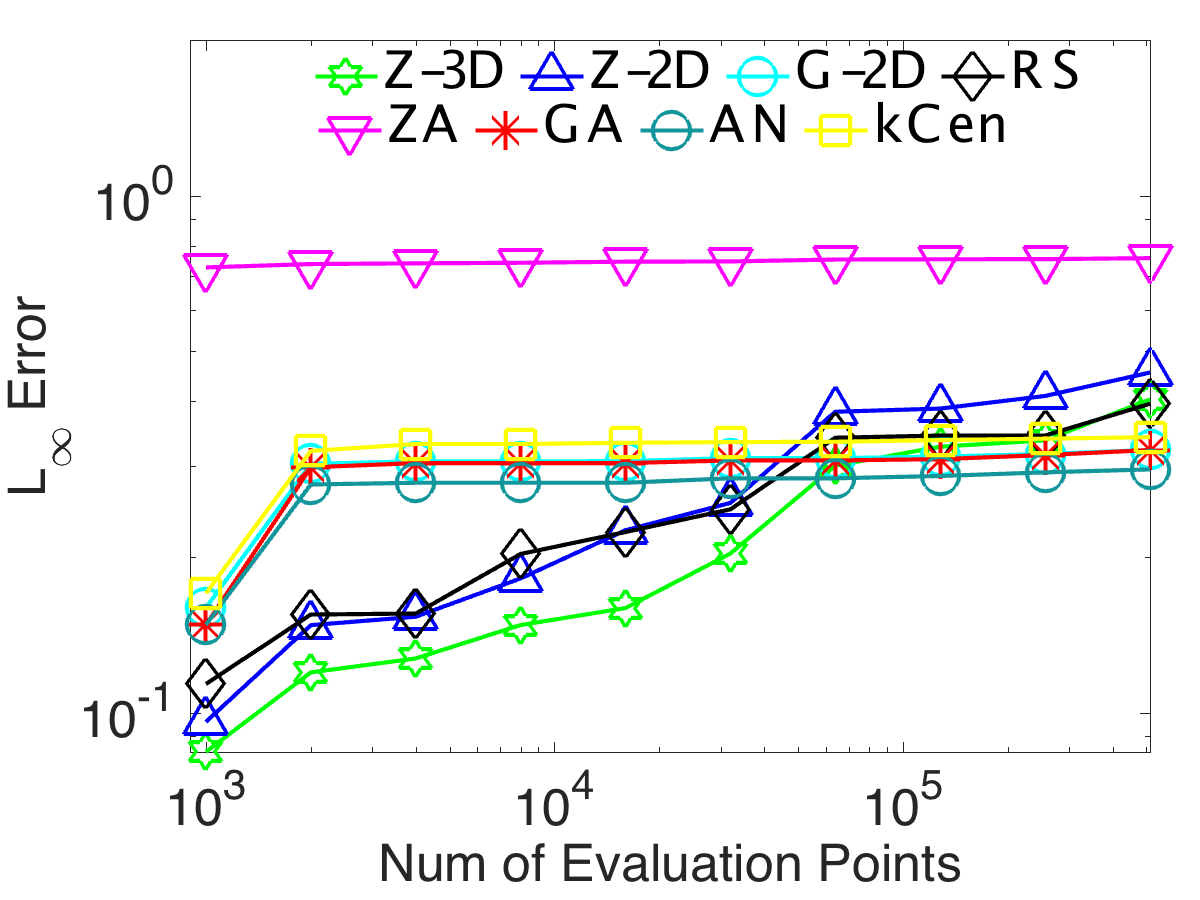}
\vspace{-.15in}
  \caption{   \label{fig:eval_size}
  \sffamily The maximum $L_\infty$ error found based on the number of evaluation points on real (left) and synthetic data(middle) when $P_x \subset \b{R}^{1}$, and real data(right) when $P_x \subset \b{R}^{2}$.  }
\end{figure*}

 \begin{figure*}[h!]
\vspace{-.1in}
 \includegraphics[width=0.33\linewidth]{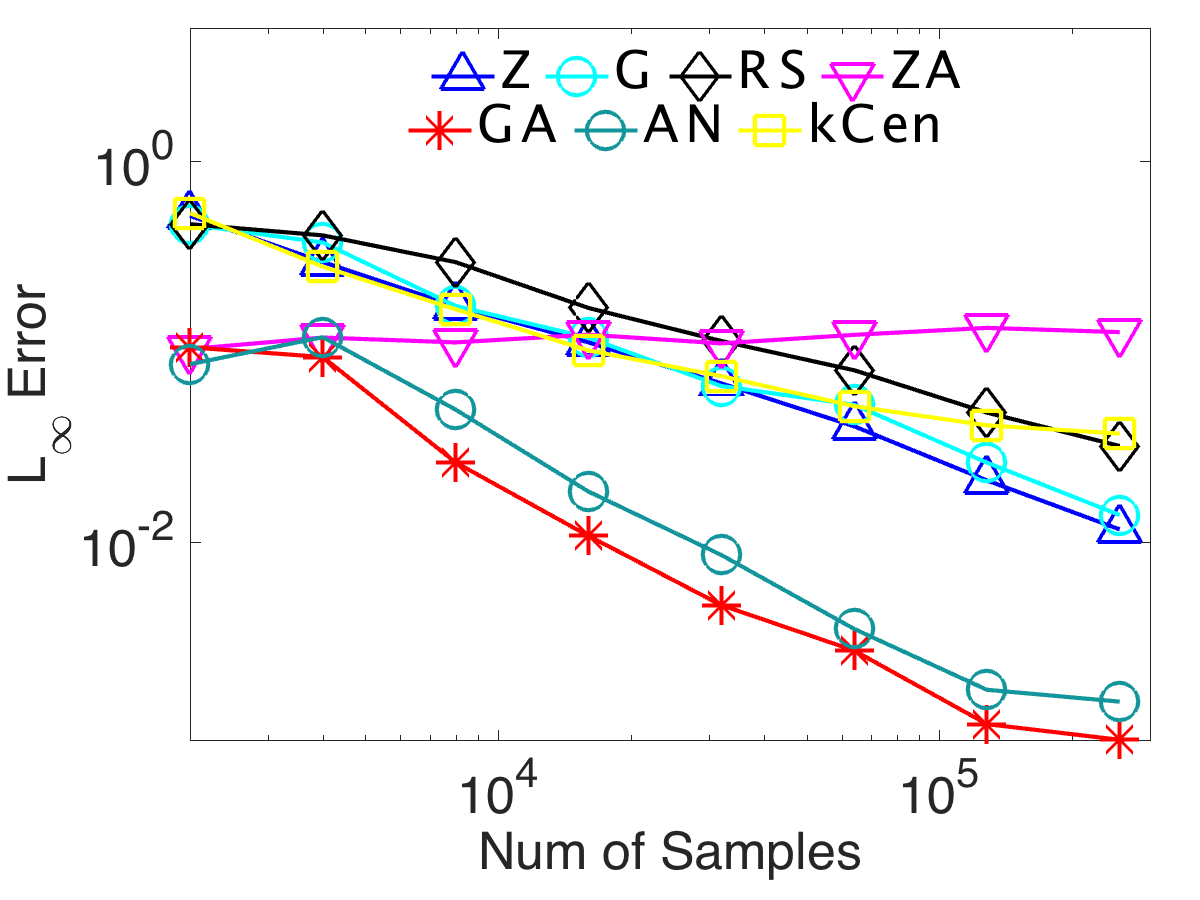}
  \includegraphics[width=0.33\linewidth]{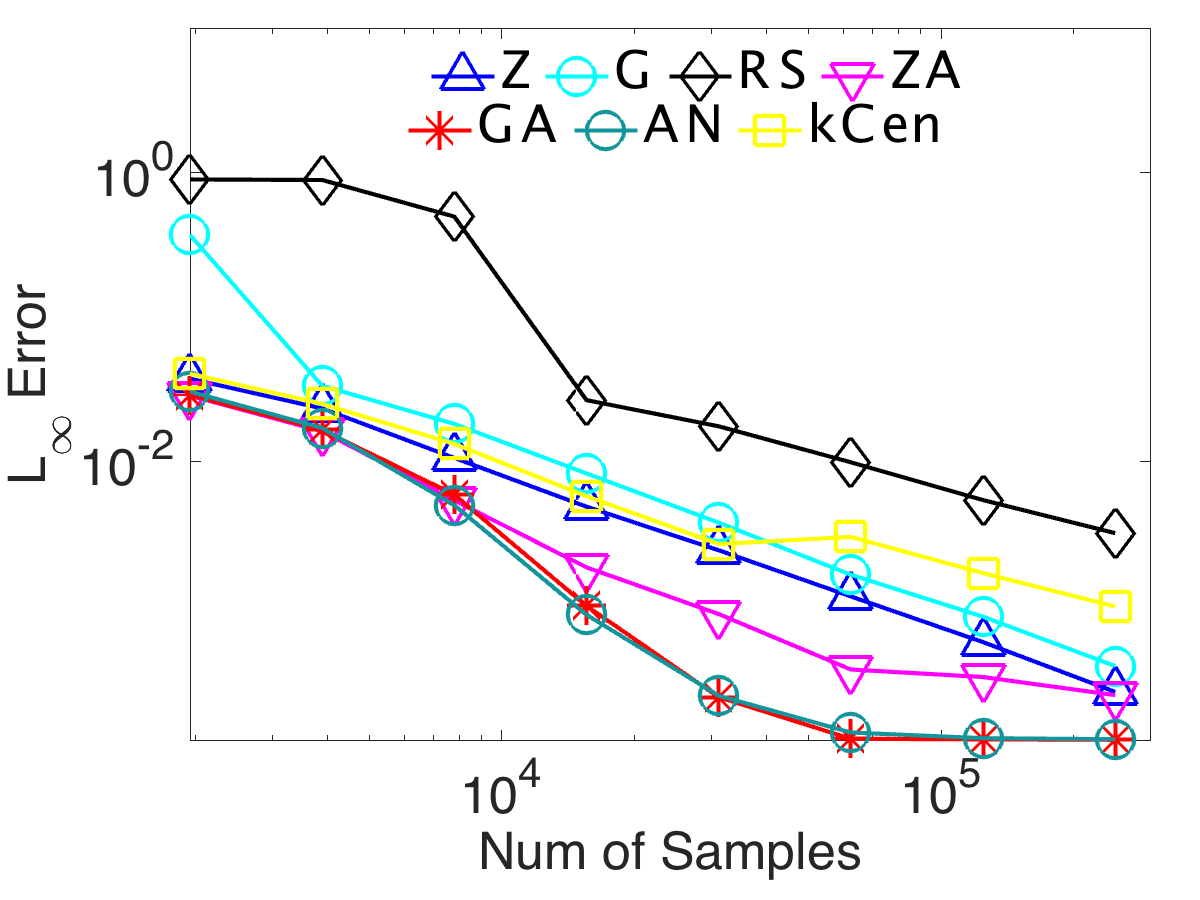}
  \includegraphics[width=0.33\linewidth]{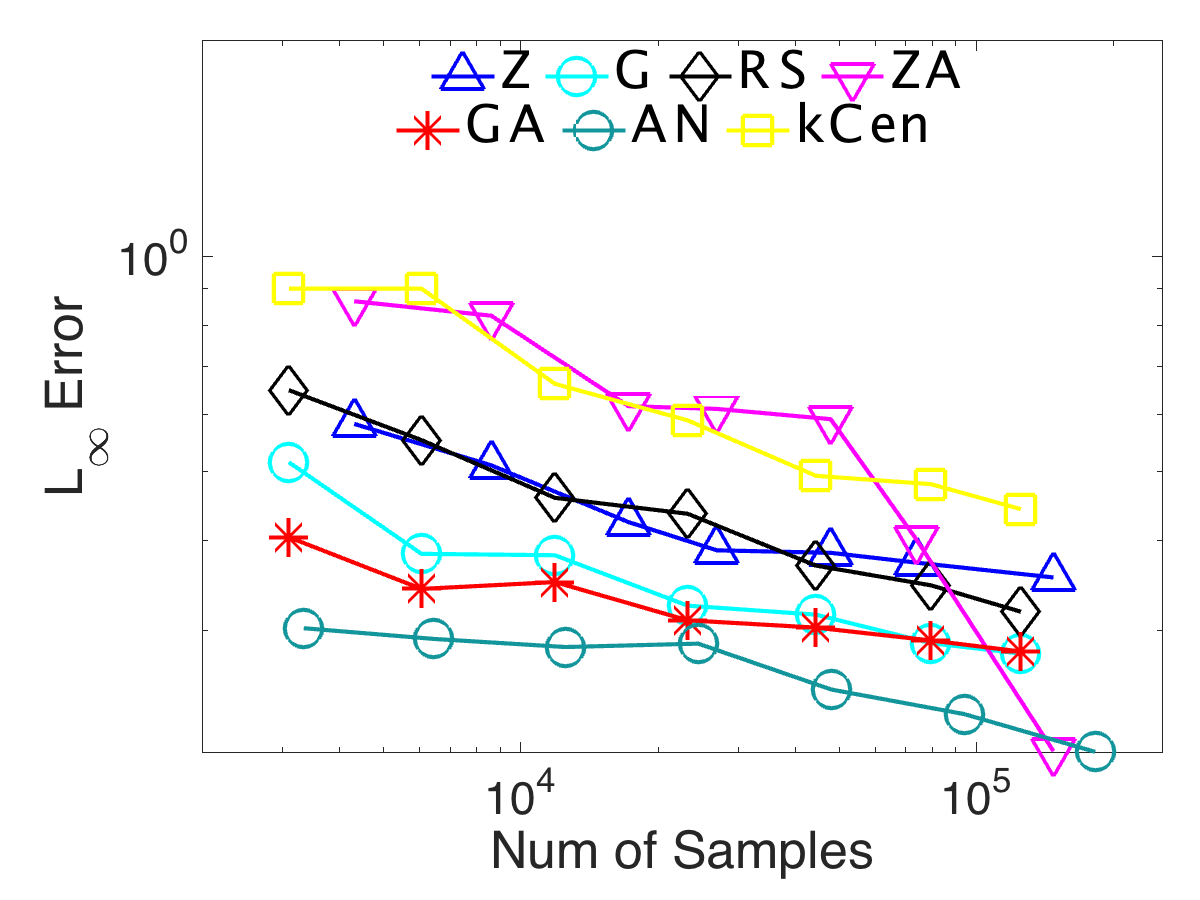}
  \vspace{-.3in}
  \caption{\label{fig:coreset}
  \sffamily $L_\infty$ error for coresets when also testing sparse regions on real data(left) and synthetic data(middle) when $P_x \subset \b{R}$, and real data(right) when $P_x \subset \b{R}^{2}$.}
  \vspace{-.1in}
\end{figure*}

The time series synthetic data $P_x \subset \b{R}^{1}$ is generated using formula:
$
	y_i = c+ \phi y_{i-1} + N(0,\sigma),
$ 
where the $x$-coordinates are $i = 0,1, \cdots$ and $y_i$ is the corresponding $y$ coordinates. It mimics a stock price so the next data depends on the previous one plus some random noise. In the experiment, we set $c = 0, \phi = 1, y_0 = 10, \sigma = 1$ and generate $1$ million points. The first $10{,}000$ points and kernel regression with bandwidth $50$ and $200$ are shown in Figure \ref{fig:syn_vis}.  

For $P_x \subset \b{R}^{2}$ real dataset, we consider OpenStreetMap data from the state of Iowa. Specifically, we use the longitude and latitude of all highway data
points as $P_x$ and time stamp as $P_y$. Kernel regression on this dataset can give a good approximation of when the highway data point is added. 

For high dimension experiment, we consider two datasets. One is house price dataset (CAD) in StatLib \cite{pace1997sparse} and the other is Physicochemical Properties of Protein Tertiary Structure Dataset (CASP) from UCI machine learning repository. CAD dataset contains $20{,}640$ observations on housing prices with $9$ economic covariates and CASP dataset has $45{,}730$ data points for $10$ random variables.  For both datasets, we use first $6$ features to do the kernel regression. 

\subsection{Effectiveness of Coresets}

Coresets guarantee that kernel regression error is bounded for \emph{all} values of $q \in \b{R}$ (as long as the data is not too sparse).  But evaluating at all of these points, is by definition, impossible.  As a result, we evaluate over a very fine covering of evaluation points (in our case $128{,}000$ for $P_x \subset \mathbb{R}^1$ and $512{,}000$ for $P_x \subset \mathbb{R}^2$).  We have plotted error as the number of evaluation points increase and observed that all methods clearly converge well before this many samples.   

In more detail, 
we randomly generate a evaluation point $q$ in the domain $\b{R}$ for $P_x \subset \b{R}$ and $q$ in the domain $\b{R}^2$ for $P_x \subset \b{R}^2$, without the restriction $\kde_P(q) > \rho$.  
With fixed coreset size $64{,}000$, we experiment on the number of evaluation points from $1{,}000$ to $128{,}000$ for $P_x \subset \b{R}$ and $1{,}000$ to $512{,}000$ for $P_x \subset \b{R}^2$ in Figure \ref{fig:eval_size}.
As the number of evaluation points increases, the value of maximum error in the domain will consistently approach some error value  and we can then have some confidence that we have the correct worst case error as this processes plateaus.
Under all the subset selection methods (Figure \ref{fig:eval_size}), the errors are steady at size $128{,}000$ for $P_x \subset \b{R}$ and $512{,}000$ for $P_x \subset \b{R}^2$, so we use evaluation points of size $128{,}000$ for $P_x \subset \b{R}$ and $512{,}000$ for $P_x \subset \b{R}^2$  in the following experiments.

Since all the methods are randomized algorithms, we run all the subset selection methods ten times and use the average errors as the final results. The bandwidth is set to $400$ for the real dataset in $\b{R}^1$,  $50$ for the synthetic dataset in $\b{R}^1$, and $50$ for real data in $\b{R}^2$; other bandwidths have similar performance. 

Figure \ref{fig:coreset} shows all the methods converge as the size of the coreset increases.  The exception is \textbf{Z-Aggregate} on real data in $\b{R}$; on inspection, the problem occurs in sparse regions, similar to Figure \ref{fig:syn_err}. 
\textbf{G-Aggregate} and \textbf{Aggregate-Neighbor}  (and sometimes \textbf{Z-Aggregate}) work significantly better compared to all the other methods in all datasets with $P_x \subset \b{R}$.  They consistently decrease, at certain sizes have one or two orders of magnitude less error, and obtain virtually no error at size about $50{,}000$.   
Even when the size of the coreset is small, \textbf{G-Aggregate} and \textbf{Aggregate-Neighbor} have very small errors and converge very fast when the size increases.  
For $P_x \subset \b{R}^2$, \textbf{Aggregate-Neighbor} achieve noticeably smaller error, but \textbf{G-Aggregate} (and \textbf{Grid}) perform well and are simpler.  

In particular \textbf{G-Aggregate} and \textbf{Aggregate-Neighbor} stay \emph{at least} one order of magnitude smaller in error than \textbf{Random Sampling}. 
This indicates it is much better to aggregate based on $x$-value than just randomly sample.  It also justifies further thinning the data with these methods if the data should be modeled as a random sample since the additional error introduced would be negligible compared to what was already present due to the sampling.  

The grid-based methods also consistently outperform the (z-order) sorting-based methods, so it is better to compress based on the $x$-coordinate change, rather than on the number of points.  

Filling in a few neighbor values (the \textbf{-Neighbor} method) can also result in significant gains in accuracy for $P_x \subset \b{R}^2$, but for $P_x \subset \b{R}$ it does not show much improvement, and sometimes performs worse.  The larger error per points (Figure \ref{fig:coreset}, real data) is mainly due to the extra points added without much error reduction.  It seems just aggregating does a good enough job for $P_x \subset \b{R}$, but for $P_x \subset \b{R}^2$ more complicated situations arise where this extra step is helpful.

  \begin{figure}
  \includegraphics[width=0.49\linewidth]{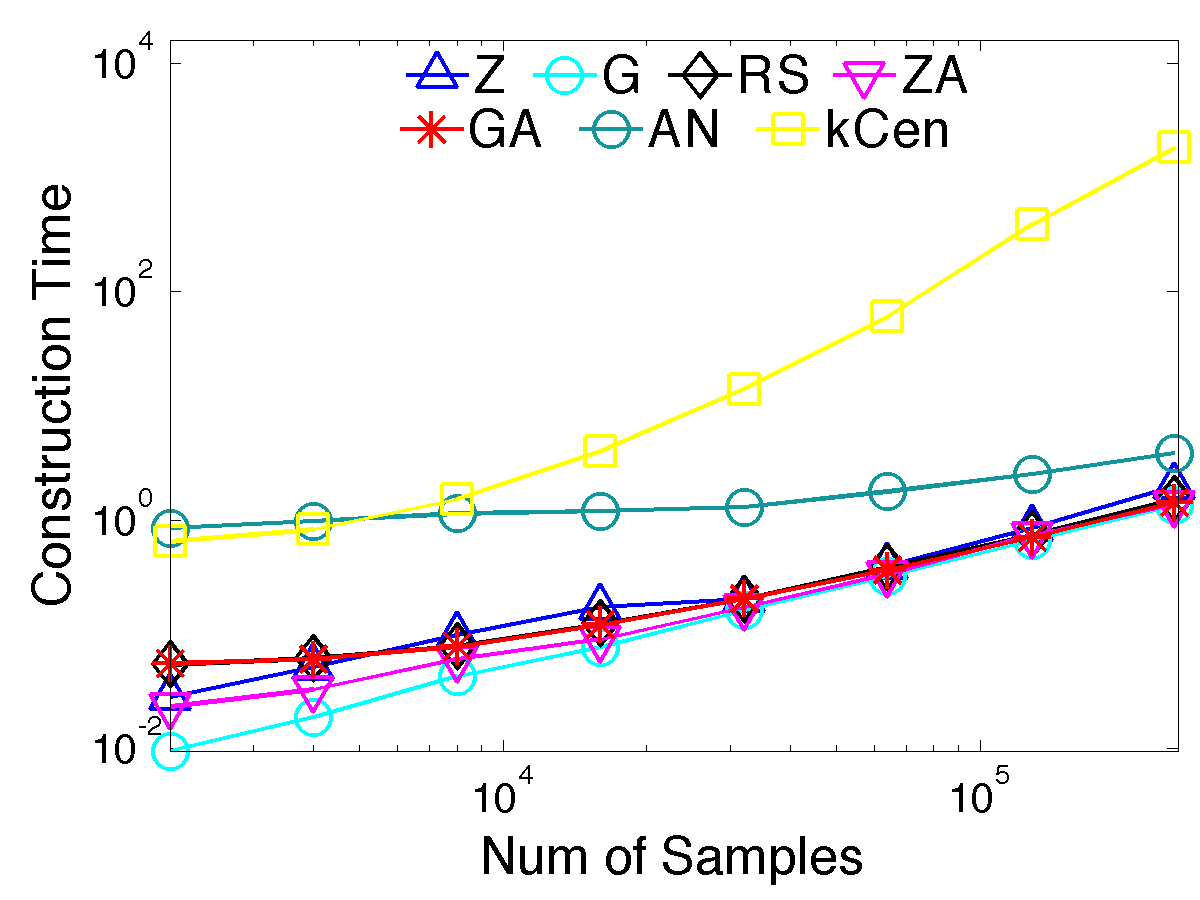}
  \includegraphics[width=0.49\linewidth]{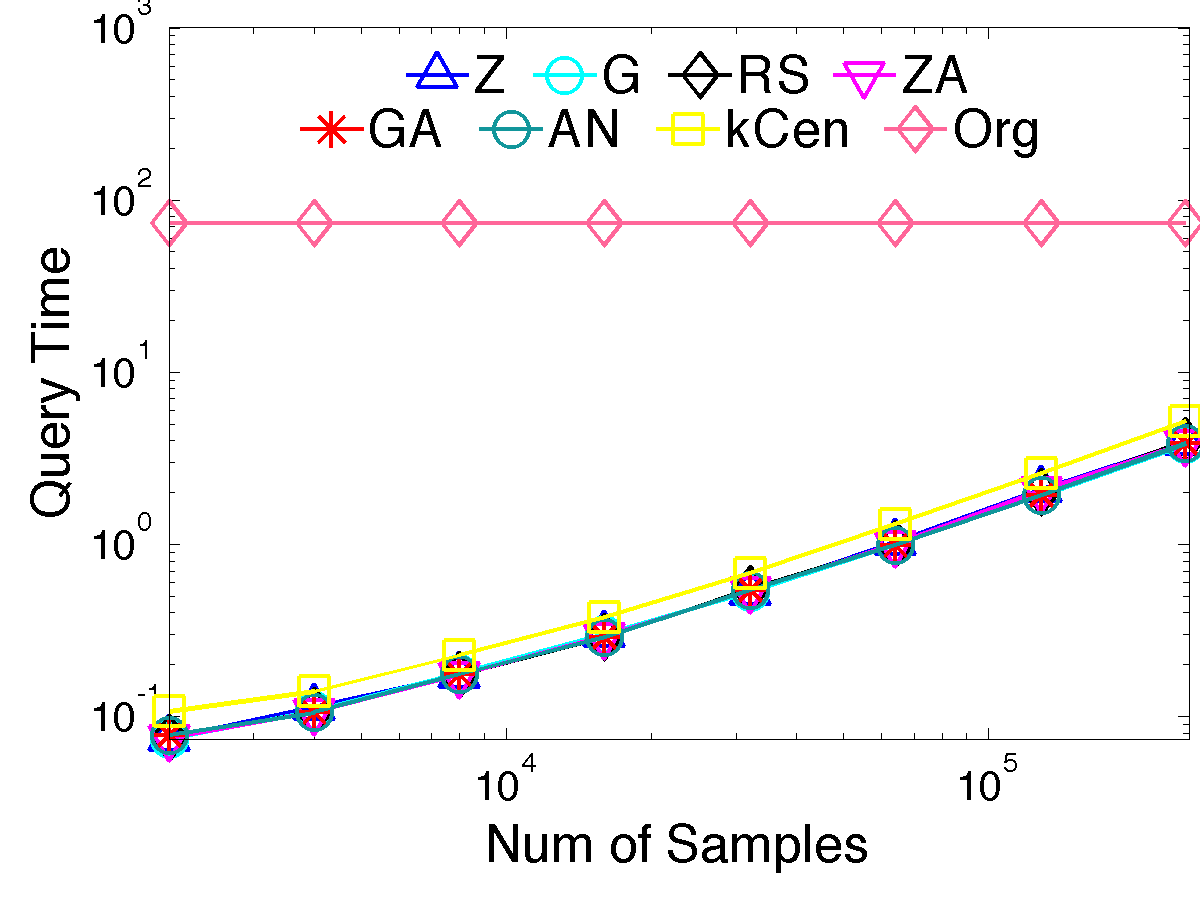}
  \vspace{-.2in}
  \caption{\sffamily Comparison of construction time and query time for real dataset with $P_x \subset \b{R}$.}
   \label{fig:housesyn_time}
   \vspace{-.05in}
\end{figure}

\begin{figure}
\vspace{-.1in}
  \includegraphics[width=0.49\linewidth]{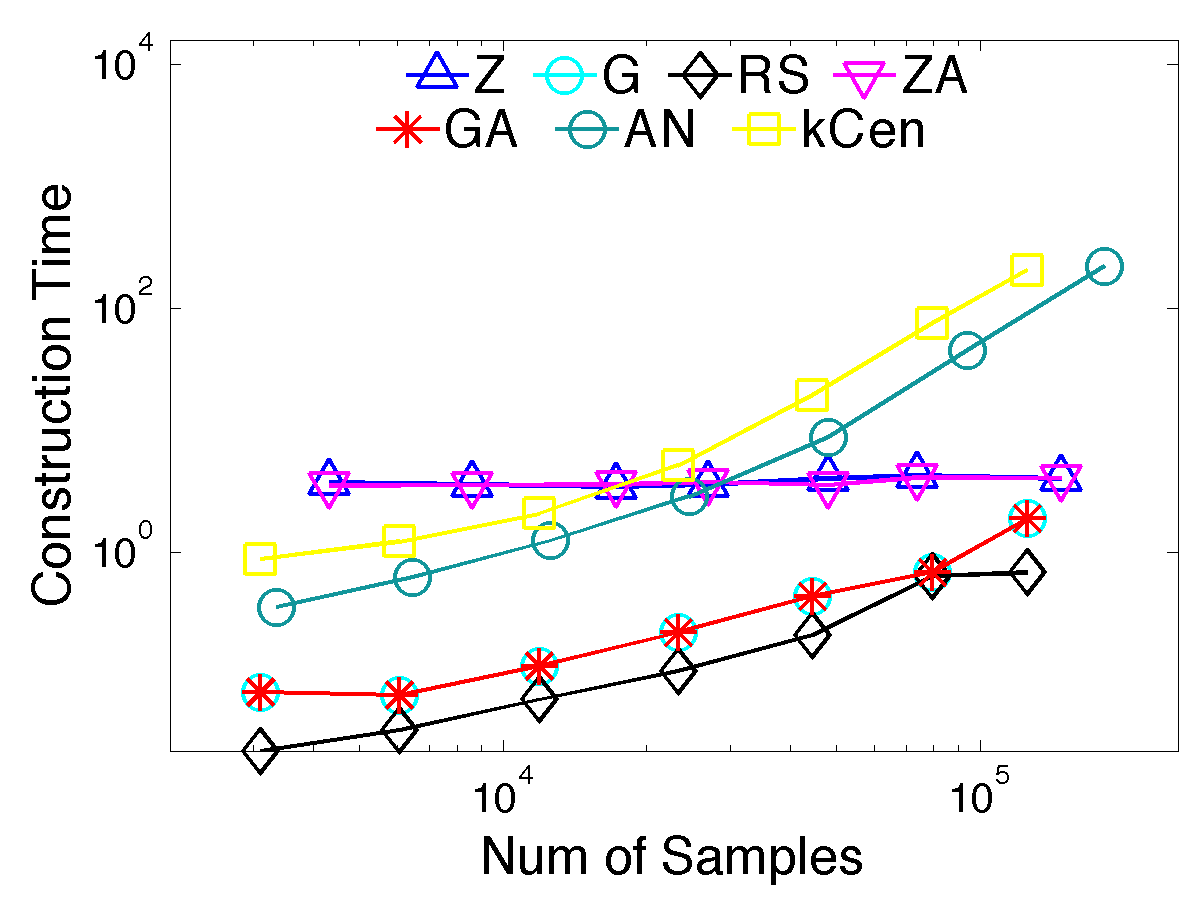}
  \includegraphics[width=0.49\linewidth]{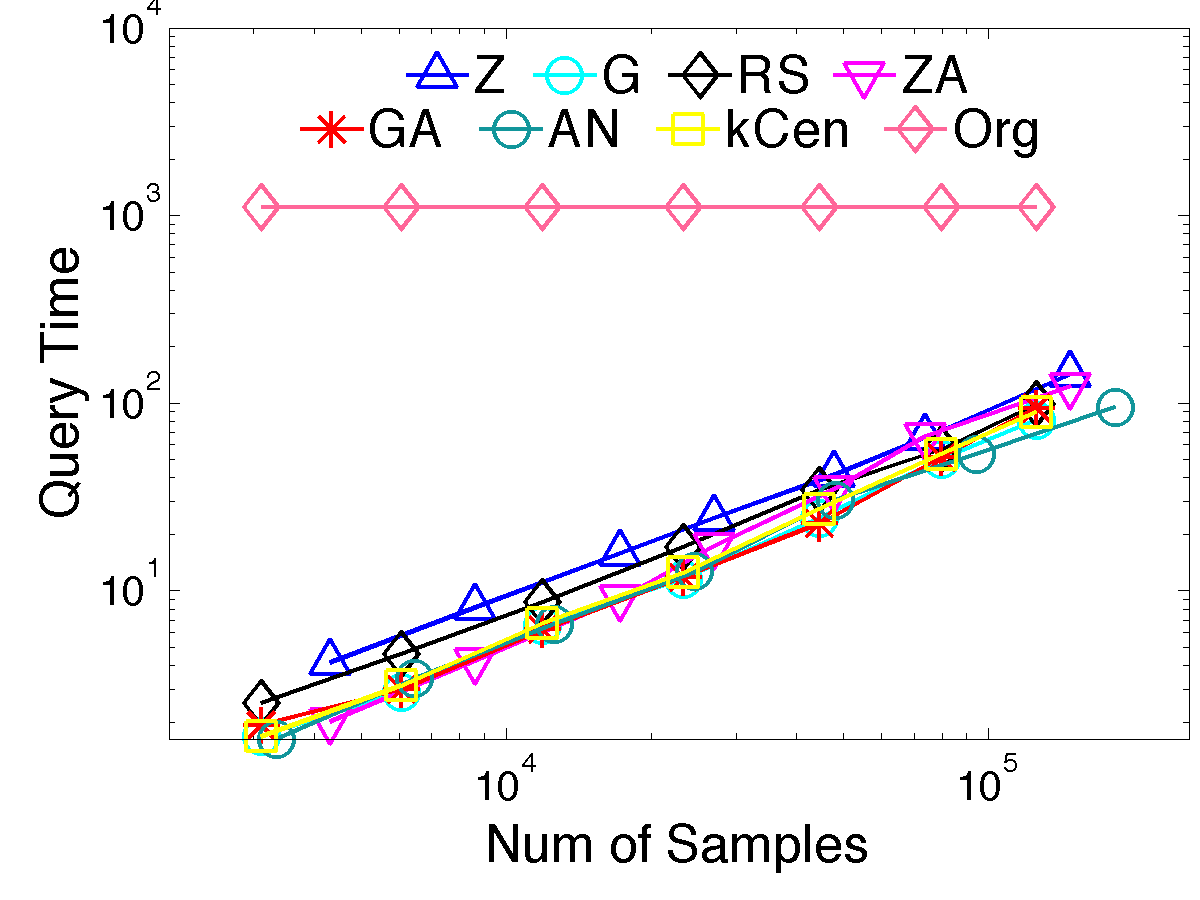}
  \vspace{-.2in}
  \caption{\sffamily Comparison of construction time and query time for real dataset with $P_x \subset \b{R}^2$.   \label{fig:iowa_time}}
\vspace{-.2in}
\end{figure}

\subsection{Efficiency of Coresets}
To show the efficiency of our methods, we compare the construction time and query time based on the coreset compared to the original dataset (denoted Org) for the dataset $P_x \subset \b{R}$. 
For both comparisons, inspired by the Improved Fast Gauss Transform~\cite{Yang2003} and other fast kernel evaluation methods, for each query point, only the neighbor points within ten bandwidth are queried to calculate the kernel regression values.  The construction time includes building the tree structure for the local data query, plus the time to generate the coreset. The query time are based on $128{,}000$ evaluation points.  

From Figure \ref{fig:housesyn_time} for $P_x \subset \b{R}$ , \textbf{Grid}, \textbf{G-Aggregate}, \textbf{Random Sample}, \textbf{Z-order} and \textbf{Z-Aggregate} have the most efficient construction times, roughly as fast as just reading the data.  Note that \textbf{k-Center} becomes quite slow for large coreset size.  Similarly,  \textbf{Grid}, \textbf{G-Aggregate}, and \textbf{Random Sample} are very efficient for $P_x \subset \b{R}^2$ (Figure \ref{fig:iowa_time}), but  \textbf{Z-order} and \textbf{Z-Aggregate} have noticeable overhead compared to the grid-based methods (and have no accuracy or analysis advantage).  
In both settings, there is also considerable time overhead to running \textbf{Aggregate-Neighbor}, which has a slight accuracy advantage for $P_x \subset \b{R}^2$ -- this it is probably only worth it if pre-processing time on these scales are not of much importance but accuracy for $P_x \subset \b{R}^2$ is.   

For the query time, all the methods improve at least $2$ orders of magnitude over using the original data.  Their query times are all about the same; this is as expected since they all produce a coreset of the same size, which can be used as proxy for the full data set in precisely the same way.  

\vspace{-.05in}
\Paragraph{Main take-away}
In conclusion, \textbf{G-Aggregate} is the best algorithm in terms of effectiveness and efficiency for $P_x \subset \b{R}$, with \textbf{Aggregate-Neighbor} has better accuracy in $P_x \subset \b{R}^2$, but has some increased overhead in construction time.  They are orders of magnitude faster than using the original data (and best among all proposed methods) and have extremely small error even for small coreset sizes (again the best among all proposed methods).  For data sets of size 1 or 2 million, they achieves very small error using only $10{,}000$ points and almost no error around $100{,}000$ points.  
They (especially \textbf{G-Aggregate}) are very simple to implement, and about as fast to construct as reading the data.  
As seen in Section \ref{sec:analysis} we prove very strong error guarantees for these methods.  

\vspace{-.1in}
 \begin{figure*}[t]
   \includegraphics[width=0.33\linewidth]{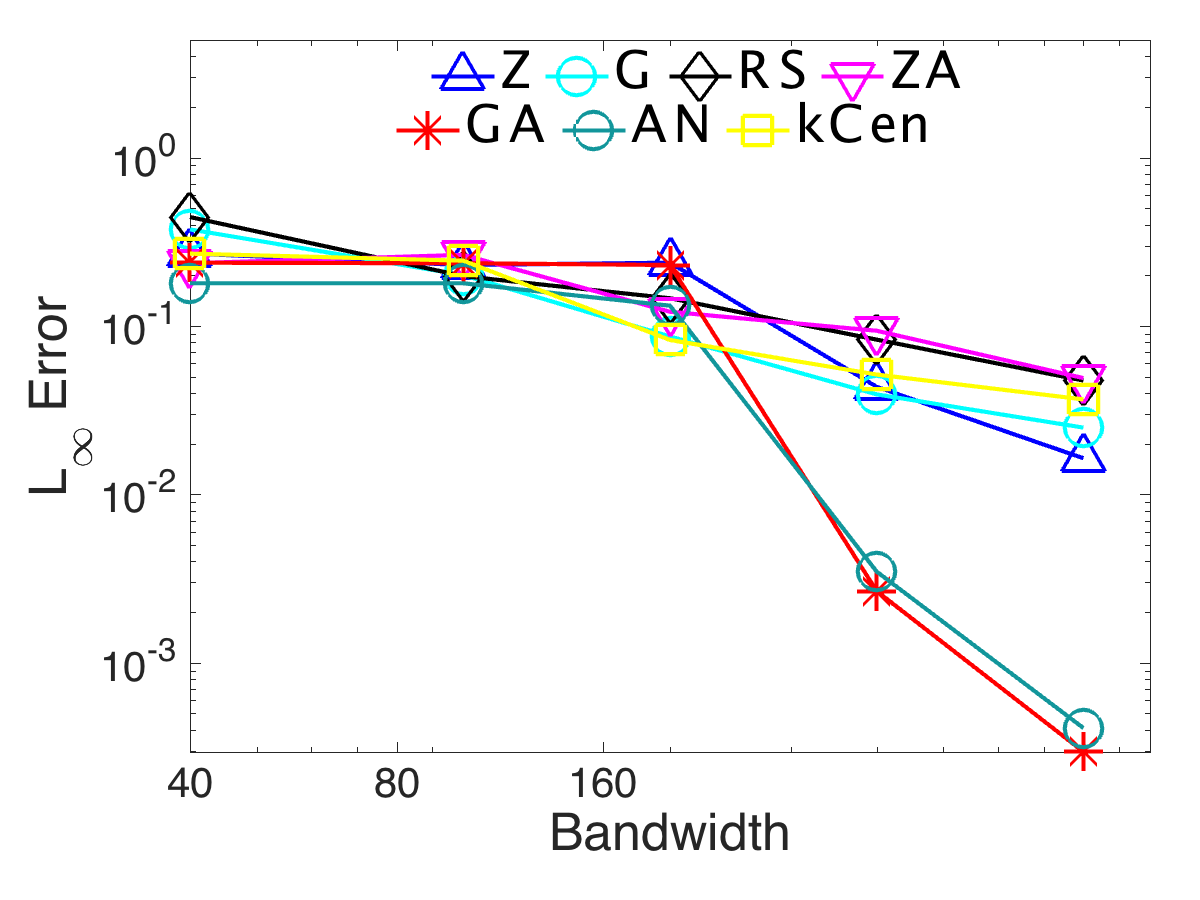}
  \includegraphics[width=0.33\linewidth]{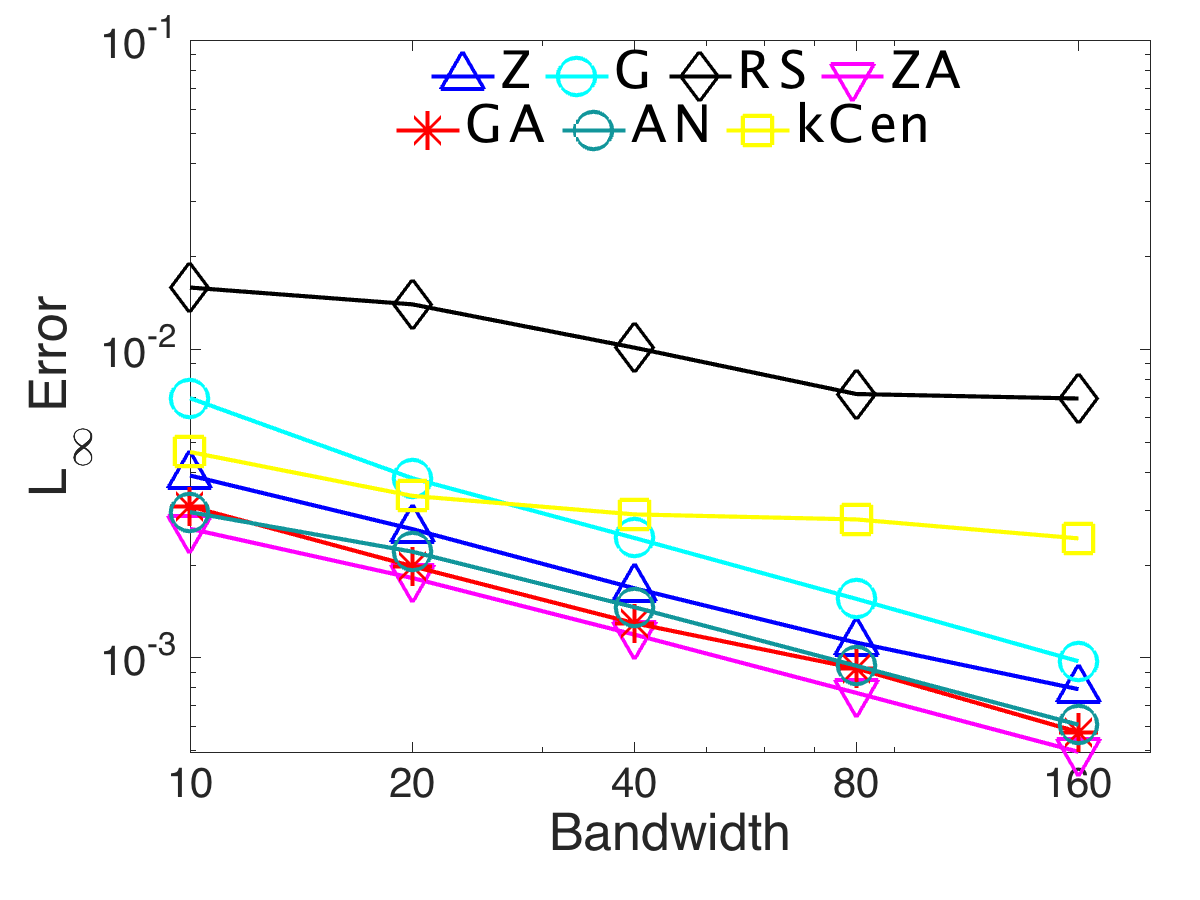}
  \includegraphics[width=0.33\linewidth]{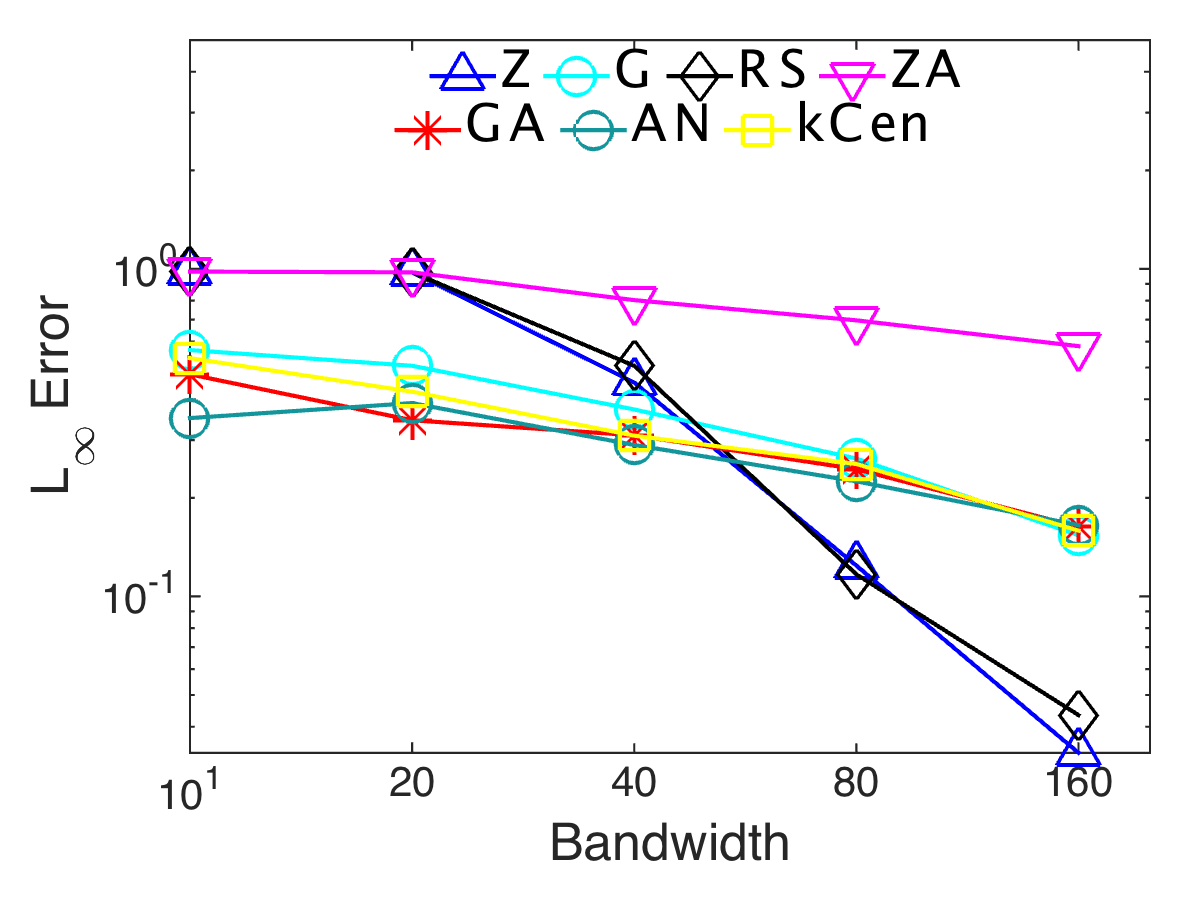}
\vspace{-.4in}
  \caption{\label{fig:syn_band}
\sffamily The relation between $L_\infty$ error and bandwidth for real data (left), synthetic data (middle) for $P_x \subset \b{R}$ and real data (right) for $P_x \subset \b{R}^2$.}
\vspace{-.25in}
\end{figure*}

\subsection{Consistency with Bandwidth}
In Figure \ref{fig:syn_band}, we test the consistency of the algorithms by varying the bandwidth.  We fix the number of evaluation point as $128{,}000$ and the median coreset size as $62{,}500$. By varying the bandwidth from $40$ to $800$ for real data (left) of $P_x \subset \b{R}$, $10$ to $160$ for synthetic data (middle) for $P_x \subset \b{R}$ and real data (right) for $P_x \subset \b{R}^2$, the errors are decreasing for all the methods.   This matches with our analysis in Section \ref{sec:analysis} and aligns with the notion that the more we smooth the data, the more stable it is, and the fewer data points we actually need.  
Again, \textbf{G-Aggregate} consistently performs the best or among the best of our methods.  The exception is the real data in $\b{R}^2$ (right), where for very large bandwidths, the simple methods \textbf{Z-order} and \textbf{Random Sample} dominate.  In this setting, the data is so smoothed these methods, which in this setting exactly or roughly amount to a random sample, work better than trying to fit gridded data to circularly smoothed estimates.   
We do not attempt to automatically choose the bandwidth, as this should be a choice of the user to determine the scale they examine the data~\cite{PZ15}.  

\vspace{-.1in}
\subsection{Progressive grid-based approaches}
We evaluate the progressive grid-based approach on the CloudLab data.  
The total coreset size is $316{,}485$, using \textbf{G-Aggregate} in each region.   And we use $256{,}000$ evaluation points.  
We evaluate the algorithm at four smoothing choices $\sigma = \{10, 15, 30, 45\}$.
For each $\sigma$, we gradually increase the window size $T$, starting at $1$ day ($86{,}400$), up to $10$ months($2.5\cdot10^7$), as shown in Figure \ref{fig:time_error_utah}.  We see that as a new region is reached, and the grid size enlarges, then so does the error.  
Also, as long as $T/\sigma$ is bounded by $4\cdot 10^4$, the error stays under $0.01$.   

\begin{figure}[]
\vspace{.1in}
\hfill
\includegraphics[width=0.8\linewidth]{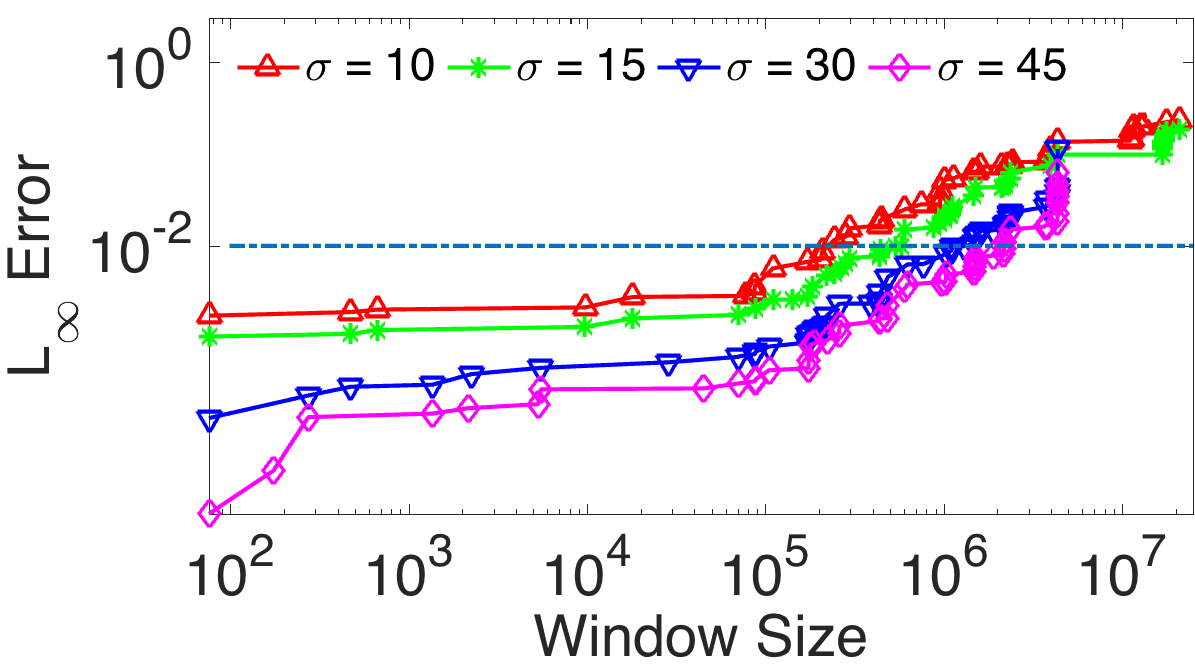} 
\hfill
\vspace{-.1in}
  \caption{\sffamily The relation between window size and $L_\infty$ error for progressive G-Aggregate method.}
   \label{fig:time_error_utah}
\end{figure}

\subsection{High dimension}
For simplicity, we only compare \textbf{G-Aggregate} and \textbf{Random Sampling}. 
When increasing the grid size, the number of empty grid increases as well, so we observe that, the size of coreset doesn't increase exponentially.  By increasing the number of grids cells from $10$ to $20$, that is a factor $2$ in each dimension, the average number of non-empty grids for CAD dataset are $\{902, 1367, 1863,2538,3150, 3791\}$ and for CASP dataset are $\{1034,1554,2122,2742,3543,4342\}$. 
The relationship of $L_\infty$ error and coreset size is shown in the left Figure \ref{fig:size_error_hd}, using bandwidths $3$ and $3.8$ respectively. The error decreases when the size of coreset increases for both methods. For the same coreset size, \textbf{Random Sampling} perform better than \textbf{G-Aggregate}, and its running time (right figure in Figure \ref{fig:size_error_hd}) is much less.  This aligns with our theoretical bounds.  For example, for grid size $20^6$, the \textbf{G-Aggregate} method takes about $250$s, while the \textbf{Random Sampling} takes only around $3$s. So we recommend the simple and fast method \textbf{Random Sampling} to generate coreset for kernel regression for high dimension datasets.

\begin{figure}[]
\vspace{-.1in}
\includegraphics[width=0.49\linewidth]{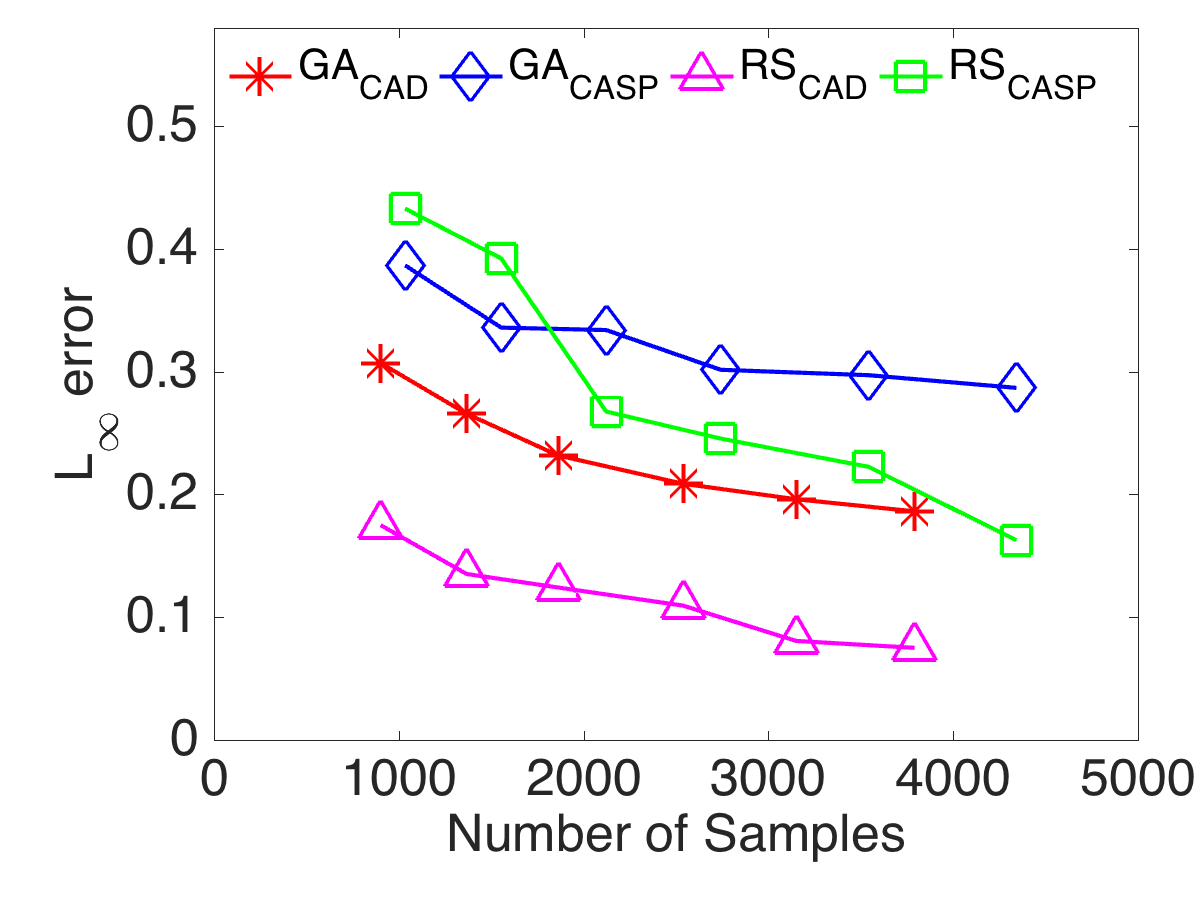} 
\includegraphics[width=0.49\linewidth]{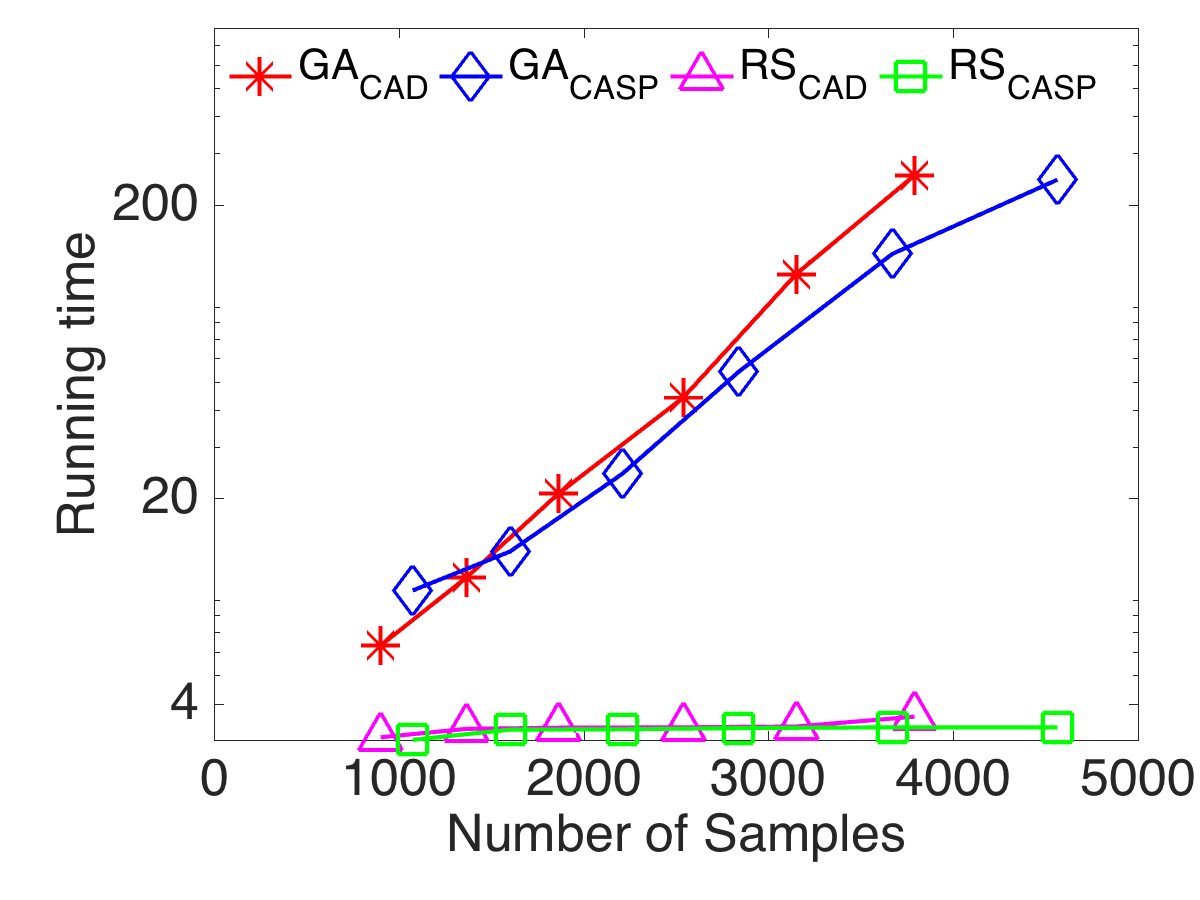} 
\vspace{-.1in}
  \caption{Left: $L_\infty$ error for coresets of high dimensional datasets. Right: Running time to generate the coresets.}
   \label{fig:size_error_hd}
\vspace{-.2in}
\end{figure}

\section{Conclusion}
We describe several algorithms for coresets for kernel regression.  Many (random sampling, order-based thinning, and grid-based thinning) are common heuristics.  As we demonstrate on data sets with millions of points, those based on grids work much better, and that small modification of aggregating and sometimes filling in sparse-neighborhood boundaries can make large difference in error reduction.  With our best methods, massive data sets can be drastically reduced in size and have negligible error.  
Find our code and data at: \url{http://www.cs.utah.edu/~jeffp/students/kernel-reg/}.

\bibliographystyle{abbrv}
\small
\bibliography{kernel-refs}
\normalsize

\newpage
\appendix


\section{Linking  and $(\rho, \eps)$-Approximations for Kernel Regression}
\label{ap:link}

\begin{theorem}
For any kernel $K : \b{R}^d \times \b{R}^d \to \b{R}^+$ linked to a range space ($\b{R}^d, \c{A}$), a $(\rho,\eps)$-approximation $S$ of $(P, \c{A})$ for $S \subset \b{R}^d$ is a $(\rho K^+,2\eps)$-approximation of $(P, K)$, where $K^+ = \max_{p,q\in P} K(p,q)$. 
\end{theorem}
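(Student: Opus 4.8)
The plan is to follow the template of the linking argument of Joshi, Kommaraju, and Phillips, but to push the \emph{relative} error bound through a layer-cake (super-level-set) representation of the kernel. The starting observation is that since $K$ maps into $\b{R}^+$ and is bounded above by $K^+ = \max_{p,q \in P} K(p,q)$, for any fixed query $x \in \b{R}^d$ the kernel value decomposes as
\[
K(p,x) = \int_0^{K^+} \mathbf{1}\!\left[ K(p,x) \geq \tau \right] \, d\tau.
\]
The linking hypothesis is exactly what converts each threshold $\tau$ into a combinatorial range: for every $\tau$ there is a range $R_\tau \in \c{A}$ with $R_\tau \cap P = \{ p \in P : K(p,x) \geq \tau \}$, and the \emph{same} set $R_\tau \subset \b{R}^d$ selects the corresponding subset of $S$. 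Summing the display over $p$ and swapping the (finite) sum with the integral then writes both kernel density estimates as integrals of normalized range counts,
\[
\kde_P(x) = \int_0^{K^+} \frac{|R_\tau \cap P|}{|P|} \, d\tau, \qquad \kde_S(x) = \int_0^{K^+} \frac{|R_\tau \cap S|}{|S|} \, d\tau.
\]

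Next I would bound the difference inside the integral. Taking the difference, moving the absolute value inside, and applying the defining inequality of a relative $(\rho,\eps)$-approximation of $(P,\c{A})$ to each individual range $R_\tau$ yields
\[
\left| \kde_P(x) - \kde_S(x) \right| \leq \eps \int_0^{K^+} \max\!\left\{ \frac{|R_\tau \cap P|}{|P|}, \rho \right\} d\tau.
\]
The crux is then to convert this integral of a pointwise maximum back into the target relative form. Using $\max\{a,b\} \leq a+b$ for nonnegative $a,b$, the integral splits as $\int_0^{K^+} \frac{|R_\tau \cap P|}{|P|} \, d\tau + \int_0^{K^+} \rho \, d\tau = \kde_P(x) + \rho K^+$; this is precisely where the radius acquires its $K^+$ factor, since the constant $\rho$ integrates over a window of length $K^+$. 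Applying $a + b \leq 2\max\{a,b\}$ upgrades this to $2\max\{\kde_P(x),\, \rho K^+\}$, so chaining the inequalities gives
\[
\left| \kde_P(x) - \kde_S(x) \right| \leq 2\eps \max\{ \kde_P(x),\, \rho K^+ \},
\]
which is exactly the definition of a relative $(\rho K^+, 2\eps)$-approximation of $(P,K)$.

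I expect the only genuinely delicate step to be the handling of the $\tau$-dependent maximum inside the integral. A naive attempt to replace $\max\{ |R_\tau \cap P|/|P|, \rho\}$ by the uniform bound $\max\{\kde_P(x), \rho\}$ would be incorrect, because $|R_\tau \cap P|/|P|$ can far exceed $\kde_P(x)$ for small $\tau$; it is the layer-cake integration that lets the $\rho$ contribution accumulate to $\rho K^+$ rather than collapse to $\rho$, and this is the structural reason the range-space threshold scales by $K^+$. The remaining points are routine: the interchange of the finite sum with the integral is immediate since $P$ and $S$ are finite, and no property of the Gaussian beyond boundedness and linking is used, so the same argument applies verbatim to any kernel linked to $(\b{R}^d,\c{A})$.
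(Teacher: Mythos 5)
Your proof is correct, and it takes a genuinely different route from the paper's. The paper adapts the charging scheme of Joshi \emph{et al.}: it sorts $P$ by similarity to the query $q$, partitions $P$ into blocks charged to the elements of $S$, and bounds the undercount and overcount of $\kde_S(q)$ separately, each time invoking the relative $(\rho,\eps)$-approximation guarantee on a single carefully chosen super-level set $H \in \c{A}$, with a further case split on whether $|P \cap H|/|P|$ is above or below $\rho$. Your layer-cake argument instead writes $\kde_P(x)$ and $\kde_S(x)$ as integrals of normalized range counts over \emph{all} super-level sets $R_\tau$, applies the relative guarantee pointwise in $\tau$, and integrates; the threshold $\rho K^+$ then appears transparently as $\int_0^{K^+} \rho \, d\tau$, and the factor $2$ comes from $a+b \leq 2\max\{a,b\}$. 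What your route buys: it is far shorter, has no sorting, charging, or undercount/overcount asymmetry, avoids the extra side condition the paper's undercount case quietly assumes (namely $\rho \geq \sum_{p \in P_1'} K(p,q)/(\eps |P|)$), and delivers the unified bound $|\kde_P(x) - \kde_S(x)| \leq 2\eps \max\{\kde_P(x), \rho K^+\}$ in one pass. What the paper's route buys is mainly structural continuity with the original non-relative linking proof it generalizes; it also only ever invokes the approximation guarantee on a constant number of ranges per query, though this is immaterial since the guarantee holds for all ranges. Two implicit points you share with the paper and should make explicit: (i) because the ground set of the range space is $\b{R}^d$, the linked range $R_\tau$ is the geometric super-level set itself, so the \emph{same} $R_\tau$ selects both $\{p \in P : K(p,x) \geq \tau\}$ and $\{s \in S : K(s,x) \geq \tau\}$ --- this is precisely what licenses applying the approximation inequality to both counts; and (ii) the representation $K(p,x) = \int_0^{K^+} \mathbf{1}[K(p,x) \geq \tau]\, d\tau$ requires every kernel value, including at points of $S$ (which need not lie in $P$), to be at most $K^+$; this holds for the centrally symmetric kernels the linking framework targets (for the Gaussian, $K^+ = K(p,p) = 1$), and the paper's proof relies on the same bound when it writes $K(s_j,q) \leq K^+$.
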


\begin{proof}
We first give the definition of $\kappa$. 
For two point sets $P,S$, define a \emph{similarity} between the two point sets as \vspace{-2mm}
\[
\kappa(P,S) = \frac{1}{|P|}\frac{1}{|S|} \sum_{p \in P} \sum_{s \in S} K(p,s),
\]
and when the point set $S$ only contains one single point $s$ and a subset $P'\subset P$, we have $\kappa_P(P',s)= (1/|P|)\sum_{p\in P'}K(p,s)$.

Then we follow the same technique in proof of Theorem 5.1 in \cite{JoshiKommarajuPhillips2011}, suppose $q$ is any query point, we can sort all $p_i \in P$ in similarity to $q$ so that $p_i < p_j$ (and by notation $i < j$) if $K(p_i,q) > K(p_j,q).$ Thus any super-level set containing $p_j$ also contains $p_i$ for $i < j$. We can now consider the one-dimensional problem on this sorted order from $q$. 

We now count the deviation $D(P,S,q) =\kde_P(q) - \kde_S(q) $  from $p_1$ to $p_n$ using a charging scheme. That is each element $s_j \in S$ is charged to $g = |P| / |S|$ points in $P$. For simplicity we will assume that $g$ is an integer, otherwise we can allow fractional charges. We now construct a partition of $P$ slightly differently, for positive and negative $D(P,S,q)$ values, corresponding to undercounts and overcounts, respectively.

\Paragraph{Undercount of $\kde_S(q)$} For undercounts, we partition $P$ into $2|S|$ sets $\{P_1', P_1, P_2', P_2,...,P'_{|S|},P_{|S|}\}$ of consecutive points by the sorted order from $q$. Starting with $p_1$, we place points in set $P_j'$ or $P_j$ following their sorted order. Recursively on $j$ and $i$, starting at $j =1$ and $i = 1$, we place each $p_i$ in $P_j'$ as long as $K(p_i, q)>K(s_j, q)$ (this may be empty). Then we place the next $g$ points $p_i$ into $P_j$. After $g$ points are placed in $P_j$, we begin with $P_{j+1}'$, until all of $P$ has been placed in some set. Let $t \le |S|$ be the index of the last set $P_j$ such that $|P_j| = g$. Note that for all $p_i \in P_j$ (for $ j \le t)$ we have $K(s_j, q) \ge K(p_i, q)$, thus $\kappa_S ({\{s_j\}},q) \ge \kappa_P(P_j,q)$.  We can now bound the undercount as 
\begin{align*}
D(P,S,q) 
&= 
\sum_{j=1}^{|S|}  \big ( \kappa_P(P_j,q) \big) -  \kappa_S ({\{s_j\}},q)+\sum_{j=1}^{|S|} \kappa_P(P_j',q)\\
& \le 
\sum_{j=1}^{t+1}\kappa_P(P_j',q)
\end{align*}
since the first term is at most $0$ and $|P_j'| = 0$ for $j > t+1$. Now consider a super-level set $H \in \c{A}$ containing all points before $s_{t+1}$; $H$ is the smallest range that contains every non-empty $P_j'$. Because (for $j \le t$) each set $P_j$ can be charged to $s_j$, then $\sum_{j=1}^t |P_j \cap H| = g |S \cap H|$. And because $S$ is an $(\rho, \eps)$-approximation of $(P, \c{A})$, then 
\[
\frac{|P \cap H|}{|P|}  - \frac{|S \cap H|}{|S|} \le \eps \max\Big\{\frac{|P \cap H|}{|P|},\rho\Big\}.  
\]
Hence
\begin{align*}
\frac{1}{|P|}\sum_{j=1}^{t+1} |P_j'| & = \frac{1}{|P|}\sum_{j=1}^{t+1} |P_j' \cap H| \\
&= \frac{1}{|P|}(\sum_{j=1}^{t+1} |P_j' \cap H| + \sum_{j=1}^{t} |P_j \cap H| - g  |S \cap H| )\\
&= \frac{|P \cap H|}{|P|}  - \frac{|S \cap H|}{|S|}\le \eps \max\Big\{\frac{|P \cap H|}{|P|},\rho\Big\}. 
\end{align*}
We can now bound 
\[
D(P,S,q) \le \sum_{j=1}^{t+1}\kappa_P(P_j',q) =  \sum_{j=1}^{t+1} \sum_{p \in P_j'}\frac{K(p,q)}{|P|}.  
\]
When $\frac{|P \cap H|}{|P|} \ge \rho$, and $\rho \ge \sum_{p \in P_1'}\frac{K(p,q)}{|P|} \Big/ \eps$,
\begin{align*}
D(P,S,q)  & \le \sum_{j=1}^{t+1} \sum_{p \in P_j'}\frac{K(p,q)}{|P|} 
\le \frac{\eps}{|P|}\sum_{p\in P}K(p,q)+  \sum_{p \in P_1'}\frac{K(p,q)}{|P|}\\
& \le \eps \kde_P(q) + \eps \rho \le 2\eps \max\{\kde_P(q),\rho\}. 
\end{align*}
The second inequality is because, all the points in $P_j$ has larger $K(\cdot,q)$ values than the points in $P_{j+1}' $ and $\frac{1}{|P|}\sum_{j=1}^{t+1} |P_j'| \le \eps \frac{|P \cap H|}{|P|}$.

Finally, when $\frac{|P \cap H|}{|P|} \le \rho$,
\[
D(P,S,q)  \le \sum_{j=1}^{t+1} \sum_{p \in P_j'}\frac{K(p,q)}{|P|} 
\le \frac{1}{|P|}\sum_{j=1}^{t+1}|P_j'|K^+ \le \eps \rho K^+.  
\]

\Paragraph{Overcount of \kde$_S(q)$} The analysis for overcounts is similar to undercounts, but we partition the data in a reverse way: we partition $P$ into $2|S|$ sets $\{P_1, P_1', P_2, P_2',...,P_{|S|},P'_{|S|}\}$ of consecutive points by the sorted order from $q$ (some of the sets may be empty). Starting with $p_n$ (the furthest point from $q$) we place points in sets $P_j'$ or $P_j$ following their reverse-sorted order. Recursively on $j$ and $i$, starting at $j = |S|$ and $i = n$, we place each $p_i$ in $P_j'$ as long as $K(p_i,q) < K(s_j, q)$ (this may be empty). Then we place the next $g$ points $p_i$ into $P_j$. After $g$ points are placed in $P_j$, we begin with $P_{j-1}'$, until all of $P$ has been placed in some set. 
Let $t \le |S|$ be the index of the last set $P_j$ such that $|P_j| = g$ (the smallest such $j$). Note that for all $p_i \in P_j$ (for $j \ge t$) we have $K(s_j, q) \le K(p_i, q)$, thus $\kappa_S ({\{s_j\}},q) \le \kappa_P(P_j,q)$. We can now bound the (negative) overcount as

\begin{align*}
 D(P,S,q) &=  \sum_{j=|S|}^{t}  \big ( \kappa_P(P_j,q) \big) -  \kappa_S ({\{s_j\}},q)\\
 &+ \sum_{j=t-1}^{1}  \big ( \kappa_P(P_j,q) \big) -  \kappa_S ({\{s_j\}},q) +\sum_{j=1}^{|S|} \kappa_P(P_j',q)\\
& \ge
 \kappa_P(P_{t-1},q) -  \sum_{j={t-1}}^1 \kappa_S ({\{s_j\}},q)
\end{align*} 
since the first full term is at least $0$, as is each $\kappa_P(P_j,q)$ and $\kappa_P(P_j',q)$ term in the second and third terms. We will need the one term   $\kappa_P(P_{t-1},q)$ related to $P$.

Now using that $S$ is an $(\rho, \eps)$-sample of $(P, \c A$), we will derive a bound on $t$. We consider the maximal super-level set $H \in \c A$ such that no points $H \in P$ are in $P_j'$ for any $j$. This is the largest set where each point $p \in P$ can be charged to a point $s \in S$ such that $K(p, q) > K(s, q)$, and thus presents the smallest (negative) overcount. In this case, $H \cap P = \cap_{j=1}^w P_j$ for some $w$ and $H \cap S = \cap_{j=1}^w \{s_j\}$. Since $t \le w$, then $|H \cap P|=(w-t+1)g + |P_{t-1}| = (w-t+1)|P|/|S| + |P_{t-1}|$ and $|H \cap S| = w$. With the definition of $(\rho, \eps)$-sample, 
\[
\frac{|S \cap H|}{|S|}  - \frac{|P \cap H|}{|P|}  \le \eps \max\Big\{\frac{|P \cap H|}{|P|},\rho\Big\}.
\]
Hence
\begin{align*}
& \frac{|S \cap H|}{|S|}  - \frac{|P \cap H|}{|P|}  \\
&= 
\frac{w}{|S|}- \frac{(w-t+1)|P|/|S|}{|P|} - \frac{|P_{t-1}|}{|P|} \\
&\ge 
\frac{t-1}{|S|}- \frac{|P_{t-1}|}{|P|}.
\end{align*} 

If $\frac{|P \cap H|}{|P|} \ge \rho$ then $\frac{t-1}{|S|}- \frac{|P_{t-1}|}{|P|} \le \eps \frac{|P \cap H|}{|P|}$, which implies $\frac{|P_{t-1}|}{|P|} - \frac{t-1}{|S|}\ge -\eps \frac{|P \cap H|}{|P|}$, then 
\begin{align*}
D(P,S,q) & \ge  \kappa_P(P_{t-1},q) -  \sum_{j={t-1}}^1 \kappa_S ({\{s_j\}},q)\\
& = \frac{\kappa(P_{t-1},q)}{|P|} - \frac{ \sum_{j={t-1}}^1K(s_j,q)}{|S|}\\
& \ge -\eps \frac{\sum_{j={w}}^1\kappa(P_j,q)}{|P|} \ge -\eps \kde_P(q). 
\end{align*}
The second inequality is because $\frac{|P_{t-1}|}{|P|} - \frac{t-1}{|S|}\ge -\eps \frac{|P \cap H|}{|P|}$ and for each $s_j$ with $j \le w$, for any $p \in P_j$, $K(s_j,q) \le K(p,q)$.

If $\frac{|P \cap H|}{|P|} \le \rho$ then $\frac{t-1}{|S|}- \frac{|P_{t-1}|}{|P|} \le \eps \rho$, which implies $t-2 \le \eps \rho |S|+\frac{|S||P_{t-1}|}{|P|}-1$. Letting $p_i = \min_{i'\in P_{t-1}}K(p_i',q)$
\begin{align*}
&D(P,S,q) \\
&\ge  \kappa_P(P_{t-1},q) -  \kappa_S ({\{s_{t-1}\}},q)+ \sum_{j={t-2}}^1 \kappa_S ({\{s_j\}},q)\\
& = \frac{\kappa(P_{t-1},q)}{|P|} - \frac{ K(s_{t-1},q)}{|S|} - (\eps \rho |S|+\frac{|S||P_{t-1}|}{|P|}-1)\frac{K^+}{|S|}\\
& \ge -\eps\rho K^+ + K^+\Big(\frac{g-|P_{t-1}|}{|P|}\Big)- \frac{g\cdot K(s_{t-1},q)- \kappa(P_{t-1},q)}{|P|}\\
&\ge -\eps\rho K^+ +K^+\Big(\frac{g-|P_{t-1}|}{|P|}\Big)-K(p_i,q)\Big(\frac{g-|P_{t-1}|}{|P|}\Big) \\
&\ge -\eps\rho K^+.  
\end{align*}

So when $\frac{|P \cap H|}{|P|} \ge \rho$, $S$ is an $(\rho,2\eps)$- approximation of $(P, \c K)$, and when $\frac{|P \cap H|}{|P|} \le \rho$, it is a $(\eps \rho K^+)$-approximation of $(P, \c K)$.
\end{proof}

\end{document}